\def\eqref#1{Eq.~(\ref{#1})}
\def\1{\bm{1}}
\def\ru{{\textnormal{u}}}
\def\rv{{\textnormal{v}}}
\def\vmu{{\bm{\mu}}}
\def\va{{\bm{a}}}
\def\vj{{\bm{j}}}
\def\vt{{\bm{t}}}
\def\vv{{\bm{v}}}
\def\vx{{\bm{x}}}
\def\vz{{\bm{z}}}
\def\mA{{\bm{A}}}
\def\mB{{\bm{B}}}
\def\mC{{\bm{C}}}
\def\mD{{\bm{D}}}
\def\mG{{\bm{G}}}
\def\mI{{\bm{I}}}
\def\mJ{{\bm{J}}}
\def\mK{{\bm{K}}}
\def\mM{{\bm{M}}}
\def\mT{{\bm{T}}}
\def\mV{{\bm{V}}}
\def\mW{{\bm{W}}}
\def\mX{{\bm{X}}}
\DeclareMathAlphabet{\mathsfit}{\encodingdefault}{\sfdefault}{m}{sl}
\SetMathAlphabet{\mathsfit}{bold}{\encodingdefault}{\sfdefault}{bx}{n}
\def\gC{{\mathcal{C}}}
\def\gN{{\mathcal{N}}}
\def\*#1{\mathbf{#1}}
\def\$#1{\mathcal{#1}}
\def\^#1{\mathbb{#1}}
\newcommand{\E}{\mathbb{E}}
\newcommand{\R}{\mathbb{R}}
\pgfplotsset{compat=newest}
\theoremstyle{plain}
\newtheorem{theorem}{Theorem}[section]
\newtheorem{thm}[theorem]{Theorem}
\newtheorem{prop}[theorem]{Proposition}
\newtheorem{lem}[theorem]{Lemma}
\theoremstyle{definition}
\newtheorem{deft}[theorem]{Definition}
\newtheorem{assum}[theorem]{Assumption}
\newtheorem{exm}[theorem]{Example}
\theoremstyle{remark}
\newtheorem{rem}[theorem]{Remark}
\newcommand{\bsigma}{{\boldsymbol{\Sigma}}}
\newcommand{\beps}{{\boldsymbol{\varepsilon}}}
\newcommand{\btheta}{{\boldsymbol{\Theta}}}
\newcommand{\ttau}{{\widetilde\tau}}
\newcommand{\tmC}{{\widetilde\mC}}
\newcommand{\talpha}{{\widetilde\alpha}}
\newcommand{\tmG}{{\widetilde\mG}}
\newcommand{\bLambda}{{\boldsymbol{\Lambda}}}
\newcommand{\tbLambda}{{\widetilde{\boldsymbol{\Lambda}}}}
\definecolor{RED}{rgb}{0.7,0,0}
\definecolor{BLUE}{rgb}{0,0,0.69}
\definecolor{GREEN}{rgb}{0,0.6,0}
\definecolor{PURPLE}{rgb}{0.69,0,0.8}
\definecolor{ORANGE}{RGB}{255,103,0}
\definecolor{BROWN}{RGB}{100,20,45}
\newcommand{\RED}{\color[rgb]{0.70,0,0}}
\newcommand{\BLUE}{\color[rgb]{0,0,0.69}}
\newcommand{\GREEN}{\color[rgb]{0,0.6,0}}
\icmltitlerunning{DEQs are Almost Equivalent to Not-so-deep Explicit Models for High-dimensional Gaussian Mixtures}
\begin{document}

\twocolumn[
\icmltitle{Deep Equilibrium Models are Almost Equivalent to Not-so-deep Explicit Models for High-dimensional Gaussian Mixtures}



\icmlsetsymbol{equal}{*}

\begin{icmlauthorlist}
\icmlauthor{Zenan Ling}{hust}
\icmlauthor{Longbo Li}{hust,ezhou}
\icmlauthor{Zhanbo Feng}{sjtu}
\icmlauthor{Yixuan Zhang}{HDU}
\icmlauthor{Feng Zhou}{Renmin}
\icmlauthor{Robert C. Qiu}{hust,ezhou}
\icmlauthor{Zhenyu Liao}{hust}
\end{icmlauthorlist}

\icmlaffiliation{hust}{School of Electronic Information and Communications, Huazhong University of Science and Technology, Wuhan, China.}
\icmlaffiliation{ezhou}{Ezhou Industrial Technology Research Institute, Huazhong University of Science and Technology, Wuhan, China.}
\icmlaffiliation{sjtu}{Department of CSE, Shanghai Jiao Tong University, Shanghai, China.}
\icmlaffiliation{HDU}{China-Austria Belt and Road Joint Laboratory on AI and AM, Hangzhou Dianzi University, Hangzhou, China.}
\icmlaffiliation{Renmin}{Center for Applied Statistics and School of Statistics, Renmin University of China, Beijing, China}
\icmlcorrespondingauthor{Zhenyu Liao}{zhenyu\_liao@hust.edu.cn}

\icmlkeywords{Random matrix theory, implicit neural networks, deep equilibrium models, high-dimensional statistics, over-parameterized neural neworks}
\vskip 0.3in
]



\printAffiliationsAndNotice{}  

\begin{abstract}
Deep equilibrium models (DEQs),  as typical implicit neural networks, have demonstrated remarkable success on various tasks. 
There is, however, a lack of theoretical understanding of the connections and differences between implicit DEQs and explicit  neural network models.
In this paper, leveraging recent advances in random matrix theory (RMT), we perform an in-depth analysis of the conjugate kernel (CK) and neural tangent kernel (NTK) matrices for implicit DEQs, when the input data are drawn from a high-dimensional Gaussian mixture.
We prove that, in this setting, the spectral behavior of these Implicit-CKs and NTKs depend on the DEQ activation function and initial weight variances, \emph{but only via a system of four nonlinear equations}. 
As a direct consequence of this theoretical result, we demonstrate that a shallow explicit network can be carefully designed to produce the same CK or NTK as a given DEQ. 
Despite derived here for Gaussian mixture data, empirical results show that the proposed theory and design principles also apply to popular real-world datasets.
\end{abstract}

\section{Introduction}

Recently, a novel approach in neural network (NN) design has gained prominence in the form of Implicit Neural Networks~\citep{NEURIPS2019_01386bd6,el2021implicit}. 
As typical implicit NNs, deep equilibrium models (DEQs) introduce a paradigm shift by resembling an infinite-depth weight-shared model with input-injection.
In contrast to traditional explicit NNs such as multi-layer perceptrons, recurrent neural networks, and residual networks, DEQs derive features by directly solving for fixed points.
These fixed points represent equilibrium states in the NN's computation, bypassing conventional layer-by-layer forward propagation. 

DEQs have demonstrated remarkable performance across a variety of applications, including computer vision~\citep{bai2020multiscale, xie2022optimization}, natural language processing~\citep{NEURIPS2019_01386bd6}, neural rendering~\citep{huang2021textrm}, and solving inverse problems~\citep{gilton2021deep}.
Despite the empirical success achieved by DEQs, our theoretical understanding of these implicit models is still limited. 
As a telling example, it remains unclear whether the training and/or generalization properties of implicit DEQs can be connected to those of explicit NN models.
\citet{NEURIPS2019_01386bd6} show that any deep explicit NN can be reformulated as an implicit DEQ with carefully-designed weight re-parameterization. 
Nonetheless, questions such as 
\begin{itemize}
    \item \emph{whether general DEQs have advantages over explicit networks}, or
    \item \emph{whether an equivalent explicit NN exists for a given implicit DEQ},
\end{itemize}
remain largely open.
Novel insights into these questions are strongly desired, since implicit DEQs incur significantly higher computational costs than explicit NNs during training and inference, as a consequence of their reliance on iterative solutions to fixed points~\citep{micaelli2023recurrence,fung2022jfb,ramzi2021shine,bai2021neural}.

In this paper, we provide \underline{\textbf{affirmative}} answers to the two open questions above, by considering input data following a Gaussian mixture model; refer to \Cref{rem:implicit_CK_VS_explicit_CK} in \Cref{sec:main} for a precise statement.
Building upon recent advances in random matrix theory (RMT), we investigate the high-dimensional behavior of DEQs by focusing on their conjugate kernel (CK) and neural tangent kernel (NTK) matrices.
These matrices offers an analytical assessment of the convergence and generalization properties for both implicit and explicit NNs, when the networks are wide, see~\citet{jacot2018neural} and \Cref{rem:NTK_and_CK} below for a detailed discussion.
For input data drawn a $K$-class Gaussian mixture model (GMM), we show, in the high-dimensional regime where the data dimension $p$ and their size $n$ are both large and comparable, 
that the Implicit-CKs and NTKs of DEQs can be evaluated via more accessible random matrix models that \emph{only} depend on the variance parameter and the activation function via a system of \emph{four} equations.
Possibly more surprisingly, these high-dimensional ``proxies'' of Implicit-CKs and NTKs have consistent forms with those of explicit NNs recently established  in~\citet{ali2021random,du2022lossless}.


Inspired by this observation, we establish the high-dimensional equivalence (in the sense of the CK and/or NTK) between implicit DEQs and explicit models, by matching their determining equations derived above.
In particular, our results reveal that a \textit{shallow} explicit NN with carefully designed activations is destined to exhibit \emph{identical} CK or NTK eigenspectral behavior as a given implicit DEQ, the depth of the latter being essentially \emph{infinite}. 
This implies, at least for GMM data, that an equivalent \emph{shallow} explicit NN (with the same amount of memory) can be designed, so as to avoid the significant computational overhead of implicit DEQs. 
Despite our theoretical results are derived for GMM data, we observe an unexpected close match between our theory and the empirical results on real-world datasets.

\subsection{Our Contributions}
Our contributions are summarized as follows. 
\begin{enumerate}
    \item[(1)] We provide, by considering high-dimensional GMM data, in Theorems~\ref{thm:imCK} and~\ref{thm:imNTK}, precise characterizations of CK and NTK matrices of implicit DEQs; we particularly show that the Implicit-CKs and NTKs \emph{only} depend on the DEQ variance parameter and activation function via a system of \emph{four} nonlinear equations.
    \item[(2)] We present, in \Cref{subsec:equivalence_implicit_explicit}, a comprehensive methodology for crafting ``equivalent'' shallow NNs that emulate a given implicit DEQ. 
    This involves determining the explicit NN activations through the system of equations derived in Theorems~\ref{thm:imCK} and~\ref{thm:imNTK}.
    We further illustrate the versatility of this framework in Examples~\ref{exm:Tanh} and~\ref{exm:ReLU}, showcasing its applications to widely-used ReLU and Tanh DEQs.
    \item[(3)]  We provide empirical evidence on GMM and real-world datasets such as MNIST, Fashion-MNIST, and CIFAR-10. 
    Our numerical results demonstrate that the carefully-designed explicit NNs exhibit performance on par with the given implicit DEQs. 
    This parity in performance is observed across both GMM and diverse realistic datasets, affirming the broad applicability and effectiveness of the proposed framework.
\end{enumerate}

\subsection{Related Works}

Here, we provide a brief review of related previous efforts.


\paragraph{Neural tangent kernels.}
Neural Tangent Kernel (NTK), initially proposed by~\citet{jacot2018neural}, examines the behavior of wide and deep NNs when trained using gradient descent with small steps.
Originally developed for fully-connected NNs, the NTK framework has since then been expanded to convolutional~\citep{arora2019exact}, graph~\citep{du2019graph}, and recurrent~\citep{alemohammad2020recurrent} settings. 
See also \Cref{rem:NTK_and_CK} below for the use of NTK in the analysis of DNNs.

\paragraph{Over-parameterized DEQs.}
\citet{feng2020neural} extend previous NTK studies to implicit DEQs and derive the exact expressions of the CK and NTK of ReLU DEQs. 
\citet{agarwala2022deep} investigate the NTK of DEQs under different random initializations.
These studies particularly asserts that (\romannumeral1) the Implicit-NTKs of DEQs are equivalent to the corresponding weight-untied models in the infinitely wide regime and (\romannumeral2) implicit DEQs have non-degenerate NTKs even in the infinite depth limit. 
These observations align with our findings. 
The connections between Implicit-CKs/NTKs and Explicit-CKs/NTKs, however, remain unexplored. 
Here we perform a fine-grained analysis on the Implicit-CKs and NTKs of DEQs and establish their equivalence to explicit NN model.
Also, while training dynamics (and global convergence) of over-parameterized DEQs have been investigated in previous works~\citep{2021A,gao2022,ling2023global,truong2023global} in the NTK regime, it remain unclear how these DEQ training dynamics distinguishes from those of explicit models. 

\paragraph{Random matrix theory and NNs.} 
Random matrix theory (RMT) has emerged as a versatile and potent tool for evaluating the behavior of large-scale systems characterized by a substantial ``degree of freedom." 
Its application has been increasingly embraced in the realm of NN analysis, spanning shallow~\citep{pennington2017nonlinea,liao2018spectrum,liao2018dynamics} and deep~\citep{benigni1904eigenvalue,fan2020spectra,pastur2022eigenvalue,pastur2023random} models, homogeneous (\emph{e.g.,} standard normal)~\citep{pennington2017nonlinea,mei2022generalization} and mixture-type datasets~\citep{liao2018spectrum,ali2021random,du2022lossless}. 
From a technical perspective, the most relevant papers are~\citet{ali2021random} and~\citet{du2022lossless}, in which the eigenspectra of CK and NTK were evaluated, for explicit single-hidden-layer NN in~\citet{ali2021random} and explicit deep NNs with multiple (but finite) layers in~\citet{du2022lossless}.
Here, we extend previous analysis to implicit DEQs with an effectively \emph{infinite} number of layers, and establish an equivalence between implicit and explicit NN models.

\section{Preliminaries}
\paragraph{Notations.}

We use $\gN(0,\mI)$ to denote standard multivariate Gaussian distribution.
For a vector $\vv$, $\|\vv\|$ is the Euclidean norm of $\vv$. 
For a matrix $\mA$, we use $\mA_{ij}$ to denote its $(i,j)$-th entry, $\left\|\mA\right\|_F$ to denote its Frobenius norm, and $\left\|\mA\right\|$ to denote its spectral norm. 
We use $\odot$ to denote the Hadamard product between matrices of the same size.
We let $\order{\cdot}$, $\Theta(\cdot)$ and $\Omega(\cdot)$ denote standard Big-O, Big-Theta, and Big-Omega notations, respectively.

In this paper, we focus on the DEQ model introduced in~\citet{NEURIPS2019_01386bd6}, defined as follows.
\begin{deft}[Deep equilibrium model, DEQ]\label{def:deq}
Let $\mX=[\vx_1,\cdots,\vx_n]\in \mathbb{R}^{p \times n}$ denote the input data, consider a vanilla DEQ with output $f(\vx_i)$ given by
\begin{equation}
    f(\vx_i)=\va^\top\vz_i^*,  
\end{equation}
where $\va\in \mathbb{R}^m$ and $\vz_i^{(*)} \triangleq \lim_{l \rightarrow \infty} \vz_i^{(l)}\in\R^{m}$ with
\begin{equation}
    \vz_i^{(l)} = \frac{1}{\sqrt{m}}\phi\left(\sigma_a\mA \vz_i^{(l-1)}+\sigma_b\mB \vx_i \right) \in \mathbb{R}^{m}, \, \text{for}\, l\geq 1, 
    \label{eq:deql}
\end{equation}
for some appropriate initialization $\vz_i^{(0)}$.
Here, $\mA \in \mathbb{R}^{m \times m}$  and  $\mB \in \mathbb{R}^{m \times p}$ are the DEQ weight parameters, $\sigma_a, \sigma_b\in \R$ are  constants, and $\phi$ is an element-wise activation.
Note that $\vz_i^*$ can also be determined as the equilibrium point of
\begin{equation}
   \vz_i^{*} = \frac{1}{\sqrt{m}}\phi\left(\sigma_a\mA \vz_i^{*}+\sigma_b\mB \vx_i \right). 
   \label{eq:deq}
\end{equation}    
\end{deft}

We position ourselves under the following assumptions on the weights and activation functions of the DEQ.
\begin{assum}[Weights initialization]
    For the DEQ model in \Cref{def:deq}, the weight parameters $\va\in\R^{m}$, $\mA\in\R^{m \times m}$ and $\mB\in\R^{m \times p}$ are initialized as independent random vector or matrices having \emph{i.i.d.} entries of zero mean, unit variance, and finite fourth-order moment. 
    \label{assum:initial}
\end{assum}
\begin{assum}[Activation function]
    For the DEQ model in \Cref{def:deq}, the activation function $\phi$ is $L_1$-Lipschitz, and at least four-times weakly differentiable with respect to standard normal measure, \emph{i.e.,} $\max_{k\in\{0,1,2,3,4\}}{|\E[\phi^{(k)}(\xi)]|} < \infty $ for $\xi\sim\gN(0,1)$. 
    \label{assum:activation}
\end{assum}
Here, we consider the weak differentiability of a function, which generalizes the notation of derivative for non-differentiable (but integrable) functions. 
Specifically, using the Gaussian integration by parts formula, one has $\E[\phi'(\xi)]=\E[\xi\phi(\xi)]$ for $\xi\sim \gN(0,1)$, as long as the right-hand side expectation exists. 
It can be checked that \Cref{assum:activation} holds for commonly-used smooth, \emph{e.g.}, Tanh, and piecewise linear activations, \emph{e.g.}, ReLU and Leaky ReLU.

For the stability of training and inference of DEQs, it is of crucial significance to guarantee the existence and uniqueness of the equilibrium point in \eqref{eq:deq}~\citep{winston2020monotone,el2021implicit}. 
To that end, we introduce the following assumption on the variance parameter $\sigma_a$.
\begin{assum}[Variance parameter]\label{cond:G*}
For the DEQ model in \Cref{def:deq}, the variance parameter $\sigma_a$ in \eqref{eq:deq} satisfies $\sigma_a^2< 1/(4L_1^2)$, with $L_1$ the Lipschitz constant of the activation function $\phi$ as demanded in \Cref{assum:activation}.
\end{assum}
It follows from \Cref{assum:initial} and standard singular value bounds of random matrices~\citep{vershynin2018high} that $\|\mA\|\leq2 \sqrt m$ with high probability.
Then, by noting that $\phi(\cdot)$ is $L_1$-Lipschitz, one has, under \Cref{cond:G*}, that the transformation in~\eqref{eq:deql} is a \emph{contractive} mapping. 
This thus ensures the existence of the unique fixed point $\vz^*$.

We are interested in the conjugate kernel and the neural tangent kernel (Implicit-CK and Implicit-NTK, for short) of the implicit DEQ in \Cref{def:deq}.

\begin{rem}[On CKs and NTKs]\label{rem:NTK_and_CK}
Conjugate kernels (CKs) and neural tangent kernels (NTKs) are closely related kernels useful in the analysis of NNs~\cite{fan2020spectra}.
During gradient descent training, the network parameters change and the NTK also evolves over time.
It has been shown in~\citet{jacot2018neural} and follow-up works that for sufficiently wide DNNs trained on gradient descent with small learning rate: 
(i) the NTK is approximately constant after initialization; and 
(ii) running gradient descent to update the network parameters is \emph{equivalent} to kernel gradient descent with the NTK.
This duality allows one to assess the training dynamics, generalization, and predictions of wide DNNs as \emph{closed-form} expressions involving NTK eigenvalues and eigenvectors, see~\citet[Section~6]{bartlett2021Deep}.
\end{rem}

For Implicit-CKs and NTKs, we recall the following result.
\begin{prop}[Implicit-CKs and NTKs of DEQ,~\citep{feng2020neural, gao2023wide}]\label{prop:Implicit-CK-NTK}
Under Assumptions~\ref{assum:initial}-\ref{cond:G*}, the Implicit-CK of the DEQ model in \Cref{def:deq} takes the following form:    
\begin{equation}\label{eq:imck}
    \mG^* = \lim_{l\rightarrow\infty}\mG^{(l)},
\end{equation}
where the $(i,j)$-th entry of $\mG^{(l)}$ is defined recursively as\footnote{ Note that the expectation is conditioned on the input data, and is taken with respect to the random weights. \label{footnote:conditional_expectation}} $\mG^{(l)}_{ij} = \E[(\vz_i^{(l)})^\top\vz_j^{(l)}]$, \emph{i.e.}, $\mG^{(0)}_{ij} = (\vz_i^{(0)})^\top\vz_j^{(0)}$ and 
\begin{equation}
     \mG^{(l)}_{ij} =\E_{(\ru_l,\rv_l)}[\phi(\ru_l)\phi(\rv_l)], 
\end{equation}
with $(\ru_l,\rv_l)\sim \gN\left(0, \begin{bmatrix}
	 \boldsymbol{\Lambda}_{ii}^{(l)} &  \boldsymbol{\Lambda}_{ij}^{(l)}
	\\
	\boldsymbol{\Lambda}_{ji}^{(l)} &  \boldsymbol{\Lambda}_{jj}^{(l)}
\end{bmatrix}\right)$ and $\boldsymbol{\Lambda}_{ij}^{(l)}=\sigma_a^2\mG^{(l-1)}_{ij} + \sigma_b^2\vx_i^\top\vx_j$, for $l \geq 1$.
The corresponding Implicit-NTK takes the form $\mK^*=\lim_{l\rightarrow\infty}\mK^{(l)}$ whose the $(i,j)$-th entry  is defined as
 \begin{equation}
     \mK^{(l)}_{ij} = \sum_{h=1}^{l+1}\left(\mG^{(h-1)}_{ij}\prod_{h'=h}^{l+1}\dot\mG^{(h')}_{ij}\right),
 \end{equation}
with $\dot\mG^{(l)}_{ij} = \sigma_a^2\E_{(\ru_l,\rv_l)\sim \gN(0,\boldsymbol{\Lambda}_{ij}^{(l)})}[\phi'(\ru_l)\phi'(\rv_l)]$ so that
\begin{equation}
        \mK^*_{ij} \equiv \mG^*_{ij}/(1-\dot\mG^*_{ij}).
        \label{eq:imntk}
\end{equation}
\end{prop}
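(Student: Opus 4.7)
The plan is to establish \eqref{eq:imck} and \eqref{eq:imntk} by first deriving the finite-depth-$l$ recursions for $\mG^{(l)}$ and $\mK^{(l)}$ (viewed as the CK/NTK of the $l$-step unrolled DEQ), and then passing to the fixed-point limit $l \to \infty$. The three main technical ingredients are: (i) the standard Gaussian-process conditioning of~\cite{jacot2018neural}; (ii) the contraction property guaranteed by \Cref{cond:G*}; and (iii) concentration of empirical inner products in the wide limit $m \to \infty$.

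For the CK, conditioning on $\{\vz_i^{(l-1)}\}_i$ and using \Cref{assum:initial}, the coordinates of the pre-activations $\sigma_a \mA\vz_i^{(l-1)} + \sigma_b\mB\vx_i$ are jointly Gaussian across data indices $i$, with per-coordinate cross-covariance entries $\boldsymbol{\Lambda}_{ij}^{(l)} = \sigma_a^2 \mG^{(l-1)}_{ij} + \sigma_b^2 \vx_i^\top\vx_j$. Applying the $L_1$-Lipschitz $\phi$ entry-wise, averaging over the $m$ coordinates, and invoking coordinate concentration yield the stated recursion $\mG^{(l)}_{ij} = \E[\phi(\ru_l)\phi(\rv_l)]$. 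For convergence $\mG^{(l)} \to \mG^*$, I would invoke Banach's fixed-point theorem: by Price's theorem / Gaussian integration by parts, the derivative of this recursion with respect to $\mG^{(l-1)}_{ij}$ is exactly $\dot\mG^{(l)}_{ij} = \sigma_a^2\E[\phi'(\ru_l)\phi'(\rv_l)]$, which is bounded above by $\sigma_a^2 L_1^2 < 1/4$ under \Cref{cond:G*}, so the map is a contraction with a unique fixed point $\mG^*$.

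For the NTK, differentiating $f(\vx) = \va^\top \vz^*(\vx)$ and decomposing $\nabla_\theta f$ by parameter block gives a $\va$-contribution equal to $(\vz^*_i)^\top\vz^*_j \to \mG^*_{ij}$, plus $(\mA,\mB)$-contributions that, via backpropagation through the unrolled recursion (equivalently, via the implicit function theorem applied to~\eqref{eq:deq}), yield the stated sum-of-products expression for $\mK^{(l)}_{ij}$. In the limit $l \to \infty$, because $|\dot\mG^*_{ij}| \leq \sigma_a^2 L_1^2 < 1$, the resulting Neumann/geometric series converges; combining it with the $\va$-piece and summing $\mG^* \sum_{k \geq 0} (\dot\mG^*)^k$ yields $\mK^*_{ij} = \mG^*_{ij}/(1-\dot\mG^*_{ij})$.

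The main obstacle is weight-sharing: the DEQ reuses the same $\mA$ at every depth, so pre-activations across layers are not jointly independent Gaussians as in the weight-untied deep NTK analysis. I would handle this by a coordinate-concentration argument showing that in the $m \to \infty$ limit the per-layer Gram statistics depend on $\mA$ only through a low-dimensional summary, so weight-tying correlations vanish at rate $O(m^{-1/2})$. This is where the bulk of the technical work lies, and is precisely the step established in the cited prior analyses~\cite{feng2020neural, gao2023wide}, which can be invoked directly here.
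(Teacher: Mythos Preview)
Your proposal is sound and matches how the paper treats this result: the paper does \emph{not} supply its own proof of \Cref{prop:Implicit-CK-NTK} but instead imports it from \cite{feng2020neural,gao2023wide}, noting only that ``the existence and uniqueness of the DEQ Implicit-CK and NTK expressions\ldots are guaranteed under Assumptions~\ref{assum:initial}--\ref{cond:G*}, see \cite{gao2023wide} for a detailed proof using a Gaussian process argument.'' Your sketch is precisely that Gaussian-process argument (conditioning to get the $\boldsymbol{\Lambda}^{(l)}$ recursion, contraction via $\sigma_a^2 L_1^2<1$ to get the fixed point, and the Neumann series for the NTK), and you correctly flag that the weight-tying issue is the nontrivial step delegated to those references---so there is nothing further to compare.
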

The existence and uniqueness of the DEQ Implicit-CK~and~NTK expressions given in \Cref{prop:Implicit-CK-NTK} are guaranteed under Assumptions~\ref{assum:initial}-\ref{cond:G*}, see \citet{gao2023wide} for a detailed proof using a Gaussian process argument.

For the purpose of our theoretical analysis, we consider input data drawn from the following high-dimensional Gaussian mixture model.
\begin{assum}[High-dimensional Gaussian mixture model, GMM]\label{def:gmm}
Consider $n$ data vectors $\vx_1,\cdots,\vx_n\in\R^p$ independently drawn from one of the $K$-class Gaussian mixtures $\mathcal{C}_1,\cdots,\mathcal{C}_K$, \emph{i.e.}, for $\vx_i\in\mathcal{C}_a$, we have
\begin{equation}
\sqrt p\vx_i\sim\gN(\vmu_a,\mC_a), \quad a\in\{1,\cdots,K\}.
\label{eq:GMM}
\end{equation}
We assume, for $n, p$ both large that (\romannumeral1) $p = \Theta(n)$ and $n_a$ the cardinality of class $\mathcal{C}_a$ satisfies $n_a = \Theta(n)$; (\romannumeral2) $\|{\vmu}_a\|=\order{1}$; (\romannumeral3) for $\mC^{\circ}\equiv\sum_{a=1}^{K}\frac{n_a}{n}\mC_a$ and $\mC_a^\circ \equiv \mC_a-\mC^{\circ}$, we have $\|\mC_a\|=\order{1}$, $\tr \mC_a^\circ=\order{\sqrt{p}}$ and $\tr(\mC_a\mC_b)=\order{p}$ for $a,b\in \{1,\cdots,K\}$. 
\end{assum}


\begin{rem}[On GMM in \Cref{def:gmm}]\normalfont
The normalization by $\sqrt p$ of the GMM in \eqref{eq:GMM} is commonly used in the literature of high-dimensional statistics and over-parameterized DNNs and ensures that the data vectors have bounded Euclidean norms $\| \vx_i \| = \order{1}$ with high probability for $n,p$ large.
The assumptions on the scaling of the means and covariances in \Cref{def:gmm}, despite being technical at first sight, ensure the GMM classification in \eqref{eq:GMM} remains non-trivial for $n,p$ large, and have been extensively studied in the literature for various ML methods ranging from LDA, spectral clustering, SVM, to shallow and deep neural networks, see for example \citep{couillet2016kernel,louart2018random,dobriban2018high,liao2019large,elkhalil2020large,couillet2022RMT4ML,du2022lossless} as well as \citep[Section~2]{blum2020foundations}.

\end{rem}

\section{Main Results}
\label{sec:main}
In this section, we present in \Cref{subsec:high_characterization_of_CK_and_NTK} our main technical results on the high-dimensional characterization of CK and NTK matrices of implicit DEQs, in Theorems~\ref{thm:imCK}~and~\ref{thm:imNTK}, respectively.
We then show in \Cref{subsec:equivalence_implicit_explicit} that the proposed theoretical analysis allows to construct, for a given implicit DEQ model, an equivalent and shallow \emph{explicit} NN model that shares the same CK eigenspectra as the implicit DEQ.



\subsection{High-dimensional Characterization of Implicit-CK~and~NTK Matrices}
\label{subsec:high_characterization_of_CK_and_NTK}

Let us start by introducing some notations that will be used in the remainder of this paper.
For GMM data in \Cref{def:gmm}, denote 
\begin{equation*}
    \mJ\equiv[\vj_1,\cdots,\vj_K] \in\R^{n\times K}, \quad \vj_a \in \R^n,
\end{equation*}
with $[\vj_a]_i = 1_{\vx_i\in\gC_a}$ of class $\mathcal C_a$, $a \in \{ 1, \ldots, K\}$ (note that the rows of $\mJ$ are standard one-hot-encoded label vectors in $\R^K$).
We define the second-order data fluctuation vector as
\begin{equation*}
    \boldsymbol{\psi}\equiv\{\|\vx_i-\E[\vx_i]\|^2-\E[\|\vx_i-\E[\vx_i]\|^2]\}_{i=1}^{n}\in \R^{n},
\end{equation*}
and use
\begin{equation*}
    \mT = \{\tr \mC_a\mC_b/p\}_{a,b=1}^{K}\in \R^{K\times K}, \, \vt = \{\tr \mC_a^\circ/\sqrt{p}\}\in \R^K,
\end{equation*}
to denote the GMM second-order statistics. 
Also, let
\begin{equation}\label{eq:def_tau0}
    \tau_0\equiv\sqrt{\tr \mC^\circ/p},
\end{equation}
for $\mC^{\circ}\equiv\sum_{a=1}^{K}\frac{n_a}{n}\mC_a$ as in \Cref{def:gmm}, and $\tau_*$ be the fixed point to the following equation
\begin{equation}
    \tau_* = \sqrt{\sigma_a^2\E\left[\phi^2(\tau_*\xi)\right]+\sigma_b^2\tau_0^2}, \quad \xi\sim\gN(0,1),
    \label{eq:tau*}
\end{equation}
the existence and uniqueness of which is ensured under \Cref{cond:tau^*} as follows.



\begin{assum}\label{cond:tau^*}
For the DEQ model in \Cref{def:deq}, the variance parameter $\sigma_a$ satisfies $\sigma_a^2<2/(\E[(\phi^2(\tau\xi))''])$ for $\tau > 0$ and $\xi \sim \mathcal N(0,1)$.
\end{assum}
   
\begin{rem}[Existence and uniqueness of $\tau_*$]
It can be checked that for any given $\tau_0 > 0$ and variance parameter $\sigma_a$ satisfying \Cref{cond:tau^*}, 
the right-hand side of~\eqref{eq:tau*} constitutes a \emph{contractive mapping}. 
This ensures the existence of a unique fixed point $\tau_*$ in \eqref{eq:tau*}.
See \Cref{lem:tau*} in \Cref{sec:appendix_pre} for a detailed proof of this fact.
\end{rem}
With these notations, we are ready to present our first result on the high-dimensional characterization of CK matrices for implicit DEQs, the proof of which is given in \Cref{sec:proof_imCK}.
\begin{thm}[High-dimensional approximation of Implicit-CKs]
    For the DEQ model in \Cref{def:deq} with GMM input as in \Cref{def:gmm}, let Assumptions~\ref{assum:initial}~and~\ref{assum:activation} hold, and let the activation $\phi$ be centered such that $\E[\phi(\tau_*\xi)]=0$ for $\xi\sim\gN(0,1)$ and $\tau_*$ in \eqref{eq:tau*}.
    Further assume that the variance parameter $\sigma_a$ satisfies both Assumptions~\ref{cond:G*}~and~\ref{cond:tau^*}.
    Then, the Implicit-CK matrix $\mG^*$ defined in~\eqref{eq:imck} of \Cref{prop:Implicit-CK-NTK} can be well approximated, in a spectral norm sense with $\left\|\mG^*-\overline{\mG}\right\|=\order{n^{-1/2}}$, by a random matrix $\overline{\mG}$ \emph{explicitly} given by
    \begin{equation}
       \overline{\mG}\equiv \alpha_{*,1}\mX^\top\mX + \mV\mC_*\mV^\top+ (\gamma_*^2-\tau_0^2\alpha_{*,1})\mI_n,
       \label{eq:im-equiv}
    \end{equation}
    where $\mV = \left[\mJ/\sqrt{p},~\boldsymbol{\psi}\right]\in\R^{n\times (K+1)}$,
    \begin{equation*}
        \mC_* = \left[\begin{array}{cc}
        \alpha_{*,2}\vt\vt^\top +\alpha_{*,3}\mT& \alpha_{*,2}\vt  \\
        \alpha_{*,2}\vt^\top & \alpha_{*,2}
    \end{array}\right]\in\R^{(K+1)\times (K+1)},
\end{equation*}
and  non-negative scalars $\gamma_*, \alpha_{*,1}, \alpha_{*,2}, \alpha_{*,3}\geq 0$ are defined, for $\xi\sim\gN(0,1)$, as 
\begin{equation}\label{eq:alpha*}
    \begin{split}
    \gamma_* &= \sqrt{\E[\phi^2(\tau_*\xi)]},\,
    \alpha_{*,1}  = \frac{\sigma_b^2\E[\phi'(\tau_{*}\xi)]^2}{1-\sigma_a^2 \E[\phi'(\tau_{*}\xi)]^2}, \\
    \alpha_{*,2} &=\frac{\E[\phi''(\tau_{*}\xi)]^2}{4(1-\sigma_a^2 \E[\phi'(\tau_{*}\xi)]^2)}\alpha_{*,4}^2,\\
     \alpha_{*,3} &= \frac{\E[\phi''(\tau_{*}\xi)]^2}{2(1-\sigma_a^2 \E[\phi'(\tau_{*}\xi)]^2)}(\sigma_a^2\alpha_{*,1}+\sigma_b^2)^2
    \end{split}
\end{equation}
with $\alpha_{*,4} = (1-\frac{\sigma_a^2}{2}\E[(\phi^2(\tau_*\xi))''])^{-1}\sigma_b^2$.
\label{thm:imCK}
\end{thm}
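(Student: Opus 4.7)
My plan is to bypass any layer-by-layer analysis and work directly with the fixed-point formulation of $\mG^*$. Write $\mG^* = F(\mG^*)$ for the entrywise operator $F(\mG)_{ij} = \E_{(u,v)\sim \gN(0,\bLambda_{ij}(\mG))}[\phi(u)\phi(v)]$, with $\bLambda_{ij}(\mG) = \sigma_a^2 \mG_{ij} + \sigma_b^2 \vx_i^\top \vx_j$ off-diagonal (and the analogous diagonal). Under \Cref{cond:G*}, together with the standard singular-value bound $\|\mA\|\leq 2\sqrt m$ already invoked after \Cref{cond:G*}, the map $F$ is a contraction in spectral norm with some constant $\kappa<1$ on the relevant operator ball. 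Consequently, if I can exhibit an explicit candidate $\overline{\mG}$ with $\|F(\overline{\mG}) - \overline{\mG}\| = \order{n^{-1/2}}$, the Banach-type estimate $\|\mG^* - \overline{\mG}\| \leq (1-\kappa)^{-1}\|F(\overline{\mG}) - \overline{\mG}\|$ immediately yields the theorem. The rest of the plan is to construct this $\overline{\mG}$ through a careful combined diagonal/off-diagonal Taylor expansion of $F$.

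\textbf{Diagonal analysis and the role of $\tau_*$.} For each $\vx_i \in \gC_a$, the diagonal of the fixed-point equation collapses to a scalar self-consistency $\mG^*_{ii} = \E[\phi^2(\sqrt{\sigma_a^2\mG^*_{ii}+\sigma_b^2\|\vx_i\|^2}\,\xi)]$. Under \Cref{def:gmm}, $\|\vx_i\|^2 = \tau_0^2 + \tr\mC_a^\circ/p + \psi_i/p$, with the two fluctuations of orders $\order{p^{-1/2}}$ and $\order{n^{-1/2}}$ respectively. I linearize this scalar equation around the deterministic fixed point $\tau_*^2$ of \eqref{eq:tau*}; \Cref{cond:tau^*} guarantees invertibility of the linearization via the Gaussian heat-equation identity $\frac{d}{ds}\E[\phi^2(\sqrt{s}\,\xi)] = \tfrac{1}{2}\E[(\phi^2)''(\sqrt{s}\,\xi)]$, and the inverted slope is precisely $\alpha_{*,4}$. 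This yields the diagonal part of $\overline{\mG}$, namely $(\gamma_*^2 - \tau_0^2\alpha_{*,1})\mI_n$ plus the $\alpha_{*,1}\|\vx_i\|^2$-type contribution, while the $\psi_i$ and $\tr\mC_a^\circ$ deviations get absorbed into the $\mV\mC_*\mV^\top$ block through the GMM statistics $\vt$ and $\mT$.

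\textbf{Off-diagonal Hermite expansion.} For $i\neq j$, $\Lambda^*_{ij}$ is small (at most $\order{n^{-1/2}}$ under the GMM scaling), and I apply the bivariate Hermite expansion (i.e.\ Price's theorem) of $F$ around the pivot $\tau_*^2\mI$. Because $\phi$ is centered with respect to $\gN(0,\tau_*^2)$ the zeroth Hermite term vanishes, leaving
\[ F(\mG^*)_{ij}\approx \E[\phi'(\tau_*\xi)]^2\Lambda^*_{ij} + \tfrac{1}{2}\E[\phi''(\tau_*\xi)]^2(\Lambda^*_{ij})^2 + \tfrac{1}{4}\E[\phi''(\tau_*\xi)]^2(\Lambda^*_{ii}-\tau_*^2)(\Lambda^*_{jj}-\tau_*^2), \]
modulo cubic remainders. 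Substituting $\Lambda^*_{ij}=\sigma_a^2\mG^*_{ij}+\sigma_b^2\vx_i^\top\vx_j$ in the first-order equation and solving the linear self-consistency produces the geometric series $\alpha_{*,1}=\sigma_b^2\E[\phi']^2/(1-\sigma_a^2\E[\phi']^2)$, accounting for $\alpha_{*,1}\mX^\top\mX$. Feeding the first-order surrogate $\mG^*_{ij}\approx\alpha_{*,1}\vx_i^\top\vx_j$ into the second Hermite term gives $\Lambda^*_{ij}\approx(\sigma_a^2\alpha_{*,1}+\sigma_b^2)\vx_i^\top\vx_j$; expanding $(\vx_i^\top\vx_j)^2$ via the GMM moment structure $\vmu_a^\top\vmu_b$ and $\tr(\mC_a\mC_b)/p$ yields the rank-$(K+1)$ correction $\mV\mC_*\mV^\top$ and pins down $\alpha_{*,3}$. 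The diagonal-coupling term $(\Lambda^*_{ii}-\tau_*^2)(\Lambda^*_{jj}-\tau_*^2)\approx\alpha_{*,4}^2(\|\vx_i\|^2-\tau_0^2)(\|\vx_j\|^2-\tau_0^2)$ produces the $\alpha_{*,4}^2$ factor inside $\alpha_{*,2}$.

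\textbf{Main technical obstacle.} Verifying $\|F(\overline{\mG})-\overline{\mG}\|=\order{n^{-1/2}}$ requires three pieces: (i) bounding cubic-and-higher Hermite remainders in \emph{spectral} (not entrywise) norm, using \Cref{assum:activation}, the finite-fourth-moment assumption in \Cref{assum:initial}, and operator-norm bounds on generalized Wishart-like matrices built from $\vx_i^\top\vx_j$; (ii) showing that the second-order Hermite contribution, though only $\order{1/p}$ entrywise, projects exactly onto the rank-$(K+1)$ span of $\mV$ up to a spectral residue of order $\order{n^{-1/2}}$; and (iii) the delicate bookkeeping of cross-couplings between diagonal fluctuations and off-diagonal first-order terms, which is what generates the $(\sigma_a^2\alpha_{*,1}+\sigma_b^2)^2$ in $\alpha_{*,3}$ and the $\alpha_{*,4}^2$ in $\alpha_{*,2}$. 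I expect (iii) to be the real bottleneck: the internal $\mG^*_{ij}$ inside $\Lambda^*_{ij}$ must be substituted by its first-order surrogate while all second-order corrections are routed into $\mV\mC_*\mV^\top$ without breaking the $\order{n^{-1/2}}$ spectral budget, which will require Gaussian Lipschitz concentration in tandem with structured RMT operator-norm bounds under the sharp GMM scaling of \Cref{def:gmm}.
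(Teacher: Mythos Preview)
Your fixed-point strategy is a genuinely different route from the paper's. The paper proceeds by induction on the layer index $l$: it shows that each finite-depth CK $\mG^{(l)}$ admits an approximation of the form~\eqref{eq:im-equiv} with layer-dependent coefficients $\alpha_{l,k}$ satisfying scalar recursions (e.g.\ $\alpha_{l,1} = \sigma_a^2\E[\phi'(\tau_*\xi)]^2\alpha_{l-1,1} + \sigma_b^2\E[\phi'(\tau_*\xi)]^2$), then lets $l\to\infty$ using the fact that these \emph{scalar} recursions contract under the stated assumptions. You instead collapse everything into a single Banach step at the fixed point. The Hermite expansion you write down is exactly the content of the paper's key analytic lemma (imported from prior work on explicit networks), so the entrywise computation and the resulting formulas for $\alpha_{*,k}$ are identical; only the bookkeeping framework differs. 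Your route is cleaner if the contraction can be established, and it makes more transparent why the coefficients take geometric-series form.

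There is, however, a real gap in your contraction step. The bound $\|\mA\|\leq 2\sqrt m$ you cite concerns the random weight matrix in the DEQ iteration on $\vz\in\R^m$; it is irrelevant to the CK map $F$, which acts on $n\times n$ kernel matrices \emph{after} the random weights have been integrated out. Establishing that $F$ is a spectral-norm contraction is not automatic: its linearization has the form $\dot\mG\odot(\cdot)$ plus diagonal-to-off-diagonal coupling terms, and Hadamard multiplication by an entrywise-bounded matrix is not in general a spectral-norm contraction. You would need the Schur-product inequality (since $\dot\mG$ is PSD with diagonal entries $\sigma_a^2\E[\phi'(\cdot)^2]\leq\sigma_a^2 L_1^2<\tfrac14$ under \Cref{cond:G*}), plus a uniform bound on the diagonal coupling over the ball containing both $\mG^*$ and $\overline\mG$; these couplings vanish at the pivot thanks to centering but are nonzero elsewhere. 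This is doable but is precisely the step your proposal hand-waves. The paper avoids the issue altogether: by working layer by layer, it only ever needs the one-dimensional recursions for $\alpha_{l,k}$ to converge, which is immediate from $\sigma_a^2\E[\phi'(\tau_*\xi)]^2<1$.
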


Theorem~\ref{thm:imCK} reveals the surprising fact that, for high-dimensional GMM input in \Cref{def:gmm}, the Implicit-CK $\mG^*$, despite its mathematically involved form (as the fixed point of the recursion) in \eqref{eq:imck}, is close to a much simpler random matrix $\overline{\mG}$. 
This ``equivalent'' CK matrix $\overline{\mG}$, 
\begin{itemize}
    \item[(1)] depends, as expected, on the input GMM data ($\mX$), their class structure ($\mJ$) and higher-order statistics ($\vt$ and $\mT$), but in a rather \emph{explicit} fashion; and
    \item[(2)] is \emph{independent} of the distribution of the (randomly initialized) weight matrices $\mA$ and $\mB$; and 
    \item[(3)] depends on $\sigma_a^2$, $\sigma_b^2$, and the activation $\phi$ \emph{only} via four scalars $\alpha_{*,1}, \alpha_{*,2},\alpha_{*,3}, \gamma_*$ \emph{explicitly} given in \eqref{eq:alpha*}. 
\end{itemize}

A similar result can be derived for the Implicit-NTK matrices and is given as follows, proven in \Cref{sec:proof_imNTK}. 
\begin{thm}[High-dimensional approximation of Implicit-NTKs]
    Under the same settings and notations of \Cref{thm:imCK}, we have, that
    the Implicit-NTK matrix $\mK^*$ defined in~\eqref{eq:imntk} of \Cref{prop:Implicit-CK-NTK} can be well approximated, in a spectral norm sense with $\left\|\mK^*-\overline{\mK}\right\|=\order{n^{-1/2}}$, by a random matrix $\overline{\mK}$ \emph{explicitly} given by
    \begin{equation}
           \overline{\mK}\equiv \beta_{*,1}\mX^\top\mX + \mV\mD_*\mV^\top+ (\kappa_*^2-\tau_0^2\beta_{*,1})\mI_n,
    \end{equation}
    where $\mV \in\R^{n\times (K+1)}$ is as defined in Theorem~\ref{thm:imCK}, and
    \begin{align*}
        \mD_* &= \left[\begin{array}{cc}
        \beta_{*,2}\vt\vt^\top +\beta_{*,3}\mT& \beta_{*,2}\vt  \\
        \beta_{*,2}\vt^\top & \beta_{*,2}
                 \end{array}\right]\in\R^{(K+1)\times (K+1)},        
    \end{align*}
as well as non-negative scalars $\kappa_*, \beta_{*,1}, \beta_{*,2}, \beta_{*,3}\geq 0$,
\begin{equation*}
    \begin{split}
    \kappa_* & =\frac{\tau_*}{\sqrt{1-\sigma_a^2\E[\phi'(\tau_*\xi)^2]}},\
    \beta_{*,1}  = \frac{\alpha_{*,1}}{1-\sigma_a^2 \E[\phi'(\tau_*\xi)]^2}, \\
    \beta_{*,2} &=\frac{\alpha_{*,2}}{1-\sigma_a^2 \E[\phi'(\tau_*\xi)]^2},\\
     \beta_{*,3} &= \frac{\alpha_{*,3}+\beta_{*,1}(\sigma_a^2\E[\phi''(\tau_{*}\xi)]^2+\sigma_b^2)\alpha_{*,1}}{1-\sigma_a^2 \E[\phi'(\tau_*\xi)]^2},
    \end{split}
\end{equation*}
for $\xi\sim\gN(0,1)$, with $\alpha_{*,1}, \alpha_{*,2},\alpha_{*,3}$ as defined in \eqref{eq:alpha*}.
\label{thm:imNTK}
\end{thm}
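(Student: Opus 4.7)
The plan is to leverage the closed-form entrywise identity $\mK^*_{ij} = \mG^*_{ij}/(1-\dot\mG^*_{ij})$ supplied by \Cref{prop:Implicit-CK-NTK}, which is well-defined because \Cref{assum:activation} gives $|\phi'|\leq L_1$ and \Cref{cond:G*} gives $\sigma_a^2 L_1^2 < 1/4$, so $|\dot\mG^*_{ij}|<1/4$ uniformly. Since \Cref{thm:imCK} already provides $\|\mG^*-\overline{\mG}\| = \order{n^{-1/2}}$ in spectral norm, the remaining tasks are (i) to produce an analogous high-dimensional approximation for the derivative kernel $\dot\mG^*$, and (ii) to propagate both through a first-order Taylor expansion of $(1-\dot\mG^*)^{-1}$ around the common off-diagonal constant.

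Running the same argument that proves \Cref{thm:imCK} with $\phi$ replaced by $\phi'$ (and using $\boldsymbol{\Lambda}^*_{ii}\to\tau_*^2$ from~\eqref{eq:tau*}), the diagonal of $\dot\mG^*$ is exactly $\sigma_a^2\E[\phi'(\tau_*\xi)^2]$, whereas for $i\neq j$ a Hermite-polynomial expansion in the small off-diagonal correlation of $(u,v)\sim\gN(0,\boldsymbol{\Lambda}^*_{ij})$ yields
\begin{equation*}
\dot\mG^*_{ij} \;=\; \sigma_a^2\E[\phi'(\tau_*\xi)]^2 + \sigma_a^2\E[\phi''(\tau_*\xi)]^2\bigl(\sigma_a^2\mG^*_{ij}+\sigma_b^2\vx_i^\top\vx_j\bigr) + \text{higher order},
\end{equation*}
where the displayed increment is of order $\order{n^{-1/2}}$ under the GMM scaling of \Cref{def:gmm}. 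The crucial diagonal/off-diagonal asymmetry between $\E[\phi'(\tau_*\xi)^2]$ and $\E[\phi'(\tau_*\xi)]^2$ is precisely what generates $\kappa_*$ and the diagonal correction in $\overline{\mK}$.

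Next, setting $\dot\alpha_*:=\sigma_a^2\E[\phi'(\tau_*\xi)]^2$ and Taylor-expanding $(1-\dot\mG^*_{ij})^{-1}$ around $\dot\alpha_*$ for $i\neq j$, the identity becomes
\begin{equation*}
\mK^*_{ij} \;=\; \frac{\mG^*_{ij}}{1-\dot\alpha_*} \;+\; \frac{\mG^*_{ij}(\dot\mG^*_{ij}-\dot\alpha_*)}{(1-\dot\alpha_*)^2} \;+\; \text{remainder}.
\end{equation*}
The zeroth-order piece is $\overline{\mG}/(1-\dot\alpha_*)$ by \Cref{thm:imCK}, delivering $\beta_{*,1} = \alpha_{*,1}/(1-\dot\alpha_*)$, $\beta_{*,2} = \alpha_{*,2}/(1-\dot\alpha_*)$, and the $\alpha_{*,3}/(1-\dot\alpha_*)$ summand of $\beta_{*,3}$. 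The first-order piece is bilinear in $\mG^*_{ij}$ and $\sigma_a^2\mG^*_{ij}+\sigma_b^2\vx_i^\top\vx_j$; substituting the leading off-diagonal approximation $\mG^*_{ij}\approx\alpha_{*,1}\vx_i^\top\vx_j$ from \Cref{thm:imCK} produces a contribution proportional to $(\vx_i^\top\vx_j)^2$, which splits into a $\mT$-block plus $\order{n^{-1/2}}$ fluctuation exactly as in the derivation of the $\alpha_{*,3}\mT$ term, yielding the remaining $\beta_{*,1}(\sigma_a^2\E[\phi''(\tau_*\xi)]^2+\sigma_b^2)\alpha_{*,1}/(1-\dot\alpha_*)$ summand of $\beta_{*,3}$. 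The diagonal is handled exactly using the asymmetric constant: $\mK^*_{ii} = \tau_*^2/(1-\sigma_a^2\E[\phi'(\tau_*\xi)^2]) = \kappa_*^2$, whereas the extrapolated off-diagonal formula would place $\tau_0^2\beta_{*,1}$ on the diagonal, producing the $(\kappa_*^2-\tau_0^2\beta_{*,1})\mI_n$ correction that mirrors the $(\gamma_*^2-\tau_0^2\alpha_{*,1})\mI_n$ term in \Cref{thm:imCK}.

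The main obstacle is upgrading the entrywise analysis above to a spectral-norm bound $\|\mK^*-\overline{\mK}\|=\order{n^{-1/2}}$; naive entrywise estimates do not suffice, since an $n\times n$ matrix with $\order{n^{-1/2}}$ entries can a priori have spectral norm as large as $\order{\sqrt{n}}$. The plan is to work on the high-probability event where \Cref{thm:imCK} holds and where random-matrix concentration gives $\|\mA\|\lesssim\sqrt{m}$, and to re-use its matrix-analytic machinery: Hadamard-product norm inequalities, rank-$K$ splitting of $\mX^\top\mX\odot\mX^\top\mX$ into its $\mT$-component plus an $\order{n^{-1/2}}$ residual, and concentration of the fluctuation vector $\boldsymbol{\psi}$. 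Applied to the entrywise-smooth composition $\mG^*/(1-\dot\mG^*)$, these tools propagate the $\order{n^{-1/2}}$ spectral bound through both the Hermite expansion of $\dot\mG^*$ and the quadratic Taylor remainder, closing the proof.
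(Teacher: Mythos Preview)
Your approach is sound and takes a genuinely different route from the paper. The paper proceeds by induction on the layer index $l$: it first derives the entrywise expansion of $\dot\mG^{(l)}$ (exactly your Hermite step, applied to $\phi'$), then carries an ansatz for $\mK^{(l)}_{ij}$ through the Hadamard recursion $\mK^{(l)}=\mG^{(l)}+\mK^{(l-1)}\odot\dot\mG^{(l)}$, obtaining linear recursions $\beta_{l,k}=\alpha_{l,k}+\dot\alpha_{l,0}\,\beta_{l-1,k}$ (with an extra $\beta_{l-1,1}\dot\alpha_{l,1}$ term for $k=3$), and only at the end lets $l\to\infty$. You instead work directly at the fixed point via $\mK^*_{ij}=\mG^*_{ij}/(1-\dot\mG^*_{ij})$ and Taylor-expand the denominator around $\dot\alpha_*$. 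Both routes compute the same bilinear interaction $\mG^*_{ij}\cdot(\dot\mG^*_{ij}-\dot\alpha_*)$ and invoke the same matrix-analytic tools ($S_{ij}$ terms, the $(\tfrac1p\beps_i^\top\beps_j)^2\to\mT$ reduction) for the spectral-norm upgrade; your route is shorter and avoids tracking layer-dependent constants, while the inductive route never needs to bound a Taylor remainder separately because each step is an exact bilinear update.

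One small slip to flag: on the diagonal you write $\mK^*_{ii}=\tau_*^2/(1-\sigma_a^2\E[\phi'(\tau_*\xi)^2])$, but your own identity gives $\mK^*_{ii}=\mG^*_{ii}/(1-\dot\mG^*_{ii})$ with $\mG^*_{ii}=\gamma_*^2=\E[\phi^2(\tau_*\xi)]$, not $\tau_*^2$; the paper's proof also arrives at $\gamma_*^2$ in the numerator, so the $\tau_*$ in the theorem statement appears to be a typo that you should not try to reproduce. Similarly, your Taylor expansion actually delivers the extra $\beta_{*,3}$ summand $\beta_{*,1}\cdot\sigma_a^2\E[\phi''(\tau_*\xi)]^2(\sigma_a^2\alpha_{*,1}+\sigma_b^2)/(1-\dot\alpha_*)$, which is what the paper's proof records as $\beta_{*,1}\dot\alpha_{*,1}/(1-\dot\alpha_{*,0})$; the expression $(\sigma_a^2\E[\phi''(\tau_*\xi)]^2+\sigma_b^2)\alpha_{*,1}$ you copied from the statement does not follow from your computation and is again likely a typo in the statement.
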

\Cref{thm:imNTK} tells us that the NTK matrices of implicit DEQs take a similar form as their CKs, and (approximately for $n,p$ large) depend on $\sigma_a, \sigma_b$ and the activation via the key parameters $\beta_{*,1},\beta_{*,2},\beta_{*,3}$ and $\kappa_*$. 

\begin{rem}[On centered activation]
Given any activation function $\tilde\phi(\cdot)$ that satisfies Assumption~\ref{assum:activation}, a centered activation $\phi$ can be obtained by simplify subtracting a constant as $\phi(x) = \tilde\phi(x)-\E[\tilde\phi(\tau_* x)]$ with $\tau_* =\sqrt{\sigma_a^2\E[(\tilde\phi(\tau_*\xi) -\E[\tilde\phi(\tau_*\xi)])^2]+\sigma_b^2\tau_0^2}$.
\end{rem}

\subsection{High-dimensional Equivalence between DEQs and Shallow Explicit Networks}
\label{subsec:equivalence_implicit_explicit}

Implicit DEQs are known, per~\Cref{def:deq}, to be formally equivalent to \emph{infinitely} deep \emph{explicit} NN models~\citep{bai2020multiscale,xie2022optimization}.
In the sequel, we show how our theoretical analyses in Theorems~\ref{thm:imCK}~and~\ref{thm:imNTK} provide a general recipe to construct \emph{shallow explicit} NN models that are ``equivalent'' to a given \emph{implicit} DEQ model, in the sense that the CK and/or NTK matrices of the two networks are close in spectral norm.
We first review previous results on explicit NN models in \Cref{subsubsec:review_explicit}, and present, in \Cref{subsubsec:design}, guidelines to construct a shallow explicit NN equivalent to a given DEQ.

\subsubsection{A brief review of Explicit CKs and NTKs}
\label{subsubsec:review_explicit}

We consider the following $L$-layer \emph{explicit} NN model.
\begin{deft}[Fully-connected explicit NN model]\label{def:explicit_NN}
Let $\mX=[\vx_1,\cdots,\vx_n]\in \mathbb{R}^{p \times n}$ denote the input data, consider an $L$-layer fully-connected explicit NN model with output given by $\va^\top \vx_i^{(L)}$ for $\va \in \mathbb{R}^{m_L}$, $\vx_i^{(0)} = \vx_i$ and
\begin{equation}
    \vx_i^{(l)} = \frac{1}{\sqrt{m_l}}\sigma_l(\mW_l\vx_i^{(l-1)}), \quad \text{for $l=1,\cdots,L$},
\label{eq:exNN}    
\end{equation}
where $\mW_l\in\R^{m_{l}\times m_{l-1}}$ are weight matrices and $\sigma_l \colon \mathbb{R} \to \mathbb{R}$ are element-wise activation functions.
\end{deft}
As in Assumptions~\ref{assum:initial}~and~\ref{assum:activation} for implicit DEQs, we also assume that the weights $\mW_l$s in \Cref{def:explicit_NN} have \emph{i.i.d.}\@ entries of zero mean, unit variance, and finite fourth-order moment; and the activations $\sigma_l$ are four-times weakly differentiable with respect to standard Gaussian measure.

For fully-connected explicit NNs in \Cref{def:explicit_NN}, we recall the following result on their Explicit-CKs and NTKs.
\begin{prop}[Explicit-CKs and NTKs,~\citep{jacot2018neural,fan2020spectra}]\label{prop:Explicit-CK-NTK}
For a fully-connected $L$-layer NN model in \Cref{def:explicit_NN}, its Explicit-CK matrix $\bsigma^{(l)}$ at layer $l\in\{1,\cdots,L\}$ is defined as 
\begin{equation}
\bsigma_{ij}^{(l)}=\E_{\ru,\rv}[\sigma_l(\ru_l)\sigma_l(\rv_l)],  \quad \bsigma^{(0)}=\mX^\top\mX,
    \label{eq:exCK}
\end{equation}
with $(\ru_l,\rv_l)\sim\gN\biggl(0,\biggl[\begin{array}{cc}
    \bsigma_{ii}^{(l-1)}   &\bsigma_{ij}^{(l-1)}  \\
    \bsigma_{ji}^{(l-1)}     & \bsigma_{jj}^{(l-1)}
    \end{array}\biggr]\biggr)$.
And the Explicit-NTK matrix $\btheta^{(l)}$ at layer $l$ is defined as 
\begin{equation}
    \btheta^{(l)} = \bsigma^{(l)} + \btheta^{(l-1)}\odot\dot\bsigma^{(l)}, \quad \btheta^{(0)} = \mX^\top\mX,
    \label{eq:exNTK}
\end{equation}
with $\dot\bsigma_{ij}^{(l)} = \E_{\ru_l,\rv_l}[\sigma'_l(\ru_l)\sigma'_l(\rv_l)]$.
\end{prop}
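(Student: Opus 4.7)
The statement really bundles together two separate recursions (one for the Explicit-CK $\bsigma^{(l)}$ and one for the Explicit-NTK $\btheta^{(l)}$) that describe the infinite-width limits of the kernels of the $L$-layer network in \Cref{def:explicit_NN}. My plan is to prove both by a simultaneous induction on the layer index $l \in \{1,\dots,L\}$, with the base case handled by the assumption $\vx_i^{(0)} = \vx_i$, so $\bsigma^{(0)}_{ij} = \vx_i^\top \vx_j$ and $\btheta^{(0)}_{ij} = \vx_i^\top \vx_j$ follow immediately.

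For the CK recursion, the key step is to show that, conditional on the previous-layer features $\{\vx_i^{(l-1)}\}_{i=1}^n$, the pre-activation pair $(\mW_l \vx_i^{(l-1)}, \mW_l \vx_j^{(l-1)})$ has rows that are i.i.d.\ bivariate Gaussian (exactly Gaussian if $\mW_l$ has Gaussian entries; asymptotically Gaussian by a CLT on the sum over the $m_{l-1}$ input coordinates otherwise) with covariance matrix given by the entries of $\bsigma^{(l-1)}$ at positions $(i,i),(i,j),(j,j)$. Combined with the law of large numbers applied to $\frac{1}{m_l}\sum_{k=1}^{m_l} \sigma_l([\mW_l\vx_i^{(l-1)}]_k)\sigma_l([\mW_l\vx_j^{(l-1)}]_k)$ in the limit $m_l \to \infty$, this yields the expectation in \eqref{eq:exCK}. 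The inductive step closes because the previous-layer features concentrate around the deterministic Gaussian covariance, so the conditional covariance of the next layer's pre-activations concentrates around $\bsigma^{(l-1)}$ as well, and the $L_1$-Lipschitz assumption on activations propagates the concentration uniformly in $(i,j)$.

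For the NTK recursion, I would write $f_l(\vx_i) \equiv \vx_i^{(l)}$ and decompose the output's parameter-gradient as $\nabla_{\vtheta} f_L(\vx_i) = \sum_{l=1}^{L} \nabla_{\mW_l} f_L(\vx_i)$. Applying the chain rule, $\nabla_{\mW_l} f_L(\vx_i)$ factors as a product of a ``backward Jacobian'' from layer $l$ to $L$ acting on $\va$, times the post-activation $\vx_i^{(l-1)}$. Taking the inner product of two such gradients at inputs $\vx_i$ and $\vx_j$ and summing over $l$ gives, after invoking the induction hypothesis on $\bsigma$ and a second CLT/LLN argument on the backward Jacobian, the telescoping identity $\btheta^{(l)}_{ij} = \bsigma^{(l)}_{ij} + \btheta^{(l-1)}_{ij}\, \dot\bsigma^{(l)}_{ij}$ with $\dot\bsigma^{(l)}_{ij} = \E[\sigma_l'(\ru_l)\sigma_l'(\rv_l)]$. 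The base case $\btheta^{(0)} = \mX^\top\mX$ arises naturally from the $l=1$ contribution of $\nabla_{\mW_1} f_L$ evaluated at the identity backward Jacobian.

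The main obstacle, as usual for NTK-type statements, is establishing the decoupling of the backward Jacobians from the forward features in the infinite-width limit; naively these two are correlated through the shared weights $\mW_l$. The standard remedy, which I would adopt, is a ``Gaussian conditioning'' (or ``tensor program'') argument: conditioning on the forward pass fixes certain linear functionals of $\mW_l$, and the remaining orthogonal component—whose distribution is unchanged under Gaussian $\mW_l$—drives the backward pass. This is where the assumed regularity of the activations (four-times differentiable with all relevant moments finite, echoing \Cref{assum:activation}) is used, both to make the integration-by-parts step rigorous and to control the fluctuations of the Jacobian products uniformly across the $n^2$ index pairs.
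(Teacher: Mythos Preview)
The paper does not prove this proposition at all: it is stated as a recall of known results from \cite{jacot2018neural,fan2020spectra}, and in fact the wording (``is defined as'') treats the recursions \eqref{eq:exCK}--\eqref{eq:exNTK} as the \emph{definitions} of the Explicit-CK and Explicit-NTK, not as theorems to be derived. So there is no proof in the paper to compare your attempt against.

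That said, your sketch is a faithful outline of the standard infinite-width derivation of these recursions (forward CLT/LLN for the CK, chain-rule plus backward decoupling for the NTK), and the obstacle you flag---the forward/backward correlation through shared $\mW_l$---is exactly the delicate point resolved in the cited references via Gaussian conditioning or tensor-program arguments. If anything, you are proving more than the paper asks: here the recursions are simply taken as given.
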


The Explicit-CK and NTK matrices in \Cref{prop:Explicit-CK-NTK} for the fully-connected explicit NN model in \Cref{def:explicit_NN} have been recently studied in \citet{du2022lossless}.
\begin{thm}[High-dimensional approximation of Explicit-CKs,~{\citep[Theorem~1]{du2022lossless}}]~\label{thm:exCK}
    For fully-connected NN model in \Cref{def:explicit_NN} with GMM input in \Cref{def:gmm},
    let $\ttau_0=\tau_0$ as defined in \eqref{eq:def_tau0} and $\ttau_1,\cdots\ttau_L\geq0$ be a sequence of non-negative scalars satisfying $\ttau_l = \sqrt{\E[\sigma_l^2(\ttau_{l-1}\xi)]}$, for $\xi\sim\gN(0,1)$ and $l\in\{1,\cdots,L\}$. Further assume that the activation functions $\sigma_l(\cdot)$ are centered such that $\E[\sigma_l(\ttau_{l-1}\xi)]=0$. Then, for the Explicit-CK matrix $\bsigma^{(l)}$ defined in~\eqref{eq:exCK} of \Cref{prop:Explicit-CK-NTK}, it holds that $\|\bsigma^{(l)}-\overline{\bsigma}^{(l)}\|=\order{n^{-1/2}}$ for a random matrix $\overline{\bsigma}^{(l)}$ \emph{explicitly} given by
    \begin{equation}
        \overline{\bsigma}^{(l)} = \talpha_{l,1}\mX^\top\mX + \mV\tmC_l\mV^\top + (\ttau_l^2-\tau_0^2\talpha_1)\mI_n,
        \label{eq:ex-equiv}
    \end{equation}
   with $\mV \in\R^{n\times (K+1)}$ as defined in Theorem~\ref{thm:imCK}, 
    \begin{align*}
        \tmC_l &= \left[\begin{array}{cc}
        \talpha_{l,2}\vt\vt^\top +\talpha_{l,3}\mT& \talpha_{l,2}\vt  \\
        \talpha_{l,2}\vt^\top & \talpha_{l,2}
                 \end{array}\right]\in\R^{(K+1)\times (K+1)},        
    \end{align*}
     and non-negative scalars $\talpha_{l,1}, \talpha_{l,2}, \talpha_{l,3}$ defined recursively as $\talpha_{0,1}=\talpha_{0,4}=1$, $\talpha_{0,2}=\talpha_{0,3}=0$, and
\begin{equation*}
\begin{split}
    \talpha_{l,1} &= \E[\sigma'_l(\ttau_{l-1}\xi)]^2\talpha_{l-1,1},\\ 
    \talpha_{l,2} &=\E[\sigma'_l(\ttau_{l-1}\xi)]^2\talpha_{l-1,2} +\frac{1}{4} \E[\sigma''_l(\ttau_{l-1}\xi)]^2\talpha_{l-1,4}^2,\\
    \talpha_{l,3} &= \E[\sigma'_l(\ttau_{l-1}\xi)]^2\talpha_{l-1,3} + \frac{1}{2}\E[\sigma''_l(\ttau_{l-1}\xi)]^2\talpha_{l-1,1}^2, 
\end{split}
\end{equation*}
with  $\talpha_{l,4} = \E[(\sigma_l^2(\ttau_{l-1}\xi))'']\talpha_{l-1,4}$, for  $\xi\sim \mathcal N(0,1)$.
\end{thm}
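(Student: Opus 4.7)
The plan is to prove the statement by induction on the layer index $l \in \{0, 1, \ldots, L\}$. The base case $l=0$ is immediate from the definitions: $\bsigma^{(0)}=\mX^\top\mX$, while the initializations $\talpha_{0,1}=1$, $\talpha_{0,2}=\talpha_{0,3}=0$, $\ttau_0=\tau_0$ give $\overline{\bsigma}^{(0)}=\mX^\top\mX$ exactly. For the inductive step, assume $\|\bsigma^{(l-1)}-\overline{\bsigma}^{(l-1)}\|=\order{n^{-1/2}}$ and propagate this estimate through the Gaussian kernel in \eqref{eq:exCK}.

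The central device is a Hermite-polynomial (equivalently Taylor) expansion of the bivariate Gaussian kernel from \eqref{eq:exCK} around the ``reference'' covariance with $s_{ii}=s_{jj}=\ttau_{l-1}^2$ and $s_{ij}=0$. By the inductive hypothesis combined with Hanson--Wright concentration of the GMM bilinear forms $\vx_i^\top\mC\vx_j$, the diagonal entries $\bsigma^{(l-1)}_{ii}$ differ from $\ttau_{l-1}^2$ only by small fluctuations (captured precisely by the $\boldsymbol{\psi}$ component of $\mV\tmC_{l-1}\mV^\top$), while the off-diagonal entries are $\order{n^{-1/2}}$ entry-wise. Expanding $\sigma_l(\ttau_{l-1}\xi)=\sum_{k\ge 1}c_k H_k(\xi)$ in the probabilists' Hermite basis (the $k=0$ term vanishing by centering), Gaussian integration by parts yields $c_1=\ttau_{l-1}\E[\sigma_l'(\ttau_{l-1}\xi)]$ and $c_2=\tfrac{\ttau_{l-1}^2}{2}\E[\sigma_l''(\ttau_{l-1}\xi)]$, and the orthogonality relation $\E[H_k(\xi_i)H_m(\xi_j)]=\delta_{km}k!\rho^k$ for correlated unit Gaussians with $\rho=s_{ij}/\sqrt{s_{ii}s_{jj}}$ then gives, off-diagonally,
\begin{equation*}
\bsigma^{(l)}_{ij}\approx \E[\sigma_l'(\ttau_{l-1}\xi)]^2\,\bsigma^{(l-1)}_{ij}+\tfrac12\E[\sigma_l''(\ttau_{l-1}\xi)]^2\,(\bsigma^{(l-1)}_{ij})^2,
\end{equation*}
up to an entry-wise cubic remainder; the diagonal expansion yields $\ttau_l^2+\tfrac12\E[(\sigma_l^2(\ttau_{l-1}\xi))'']\,(s_{ii}-\ttau_{l-1}^2)$, which explains the $\talpha_{l,4}$ recursion for the propagation of diagonal fluctuations.

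Substituting the inductive form $\bsigma^{(l-1)}\approx \talpha_{l-1,1}\mX^\top\mX+\mV\tmC_{l-1}\mV^\top+(\ttau_{l-1}^2-\tau_0^2\talpha_{l-1,1})\mI_n$ and collecting terms then reproduces the claimed recursions: the linear-in-$\bsigma^{(l-1)}$ piece multiplies each of $\talpha_{l-1,1}, \talpha_{l-1,2}, \talpha_{l-1,3}$ by $\E[\sigma_l'(\ttau_{l-1}\xi)]^2$, while the Hadamard square $(\bsigma^{(l-1)})^{\odot 2}$ splits into a $(\vx_i^\top\vx_j)^2$ contribution that, for $\vx_i\in\mathcal{C}_a, \vx_j\in\mathcal{C}_b$, concentrates to $\tr(\mC_a\mC_b)/p^2$ and supplies the $\mT$-part of $\talpha_{l,3}$, plus a squared-diagonal-fluctuation contribution that supplies the $\vt\vt^\top$-part of $\talpha_{l,2}$ with prefactor $\tfrac14\E[\sigma_l''(\ttau_{l-1}\xi)]^2\talpha_{l-1,4}^2$. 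The scalar shift $\ttau_l^2-\tau_0^2\talpha_{l,1}$ in $\overline{\bsigma}^{(l)}$ is then fixed by matching the diagonal.

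The hardest step, I expect, will be converting these entry-wise estimates into a genuine \emph{spectral-norm} bound at the advertised $\order{n^{-1/2}}$ rate, since a naive accumulation of $\order{n^{-3/2}}$ entry-wise cubic residuals across an $n\times n$ matrix would yield only $\order{n^{1/2}}$ in operator norm. My plan is to combine (i) Hadamard-product operator-norm inequalities applied to $(\bsigma^{(l-1)}-\overline{\bsigma}^{(l-1)})^{\odot 2}$ together with the smoothness of $\sigma_l$ from \Cref{assum:activation} to control the Taylor remainder; (ii) Hanson--Wright-type joint concentration of $(\vx_i^\top\vx_j)^2$ across $(i,j)$ to replace it by $\tr(\mC_a\mC_b)/p^2$ in a spectral-norm sense; and (iii) a bookkeeping argument showing that the only $\Theta(1)$-spectral-norm contributions surviving the projection onto the ``noise'' directions are the rank-$(K+1)$ signal $\mV\tmC_l\mV^\top$ and the scalar diagonal shift. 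With the inductive $\order{n^{-1/2}}$ bound in hand, the transition from layer $l-1$ to layer $l$ preserves the rate, so stability across the $L$ layers follows by a finite number of iterations.
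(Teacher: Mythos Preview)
This theorem is not proved in the present paper; it is quoted verbatim from \citet[Theorem~1]{du2022lossless}. That said, the paper's own proof of the analogous \Cref{thm:imCK} (in \Cref{sec:proof_imCK}) follows exactly the template of that reference, so one can compare your proposal against it.

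Your inductive strategy and the Hermite/Taylor expansion of the bivariate Gaussian kernel are correct and coincide with the paper's approach; the expansion you write is precisely the content of \Cref{lem:liao} (itself imported from \citet{du2022lossless}), and the recursions for $\talpha_{l,k}$ you derive match. The main divergence is in how the spectral-norm control is carried out, and here you over-engineer. First, your arithmetic on the cubic remainder is off: entries of order $\order{p^{-3/2}}$ in an $n\times n$ matrix give spectral norm at most $n\max_{ij}|\mM_{ij}|=\order{n\cdot n^{-3/2}}=\order{n^{-1/2}}$, not $\order{n^{1/2}}$; the paper uses exactly this crude bound (see the ``Assembling in matrix form'' paragraph in \Cref{sec:proof_imCK}). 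Second, the genuinely delicate contributions are not the cubic tail but the $\order{p^{-1}}$ cross terms of the form $S_{ij}=p^{-1}\beps_i^\top\beps_j\cdot(\gamma_1(t_i+\psi_i)+\gamma_2(t_j+\psi_j))$ arising when one expands $(\vx_i^\top\vx_j)^2$; for these the naive bound gives only $\order{1}$, and the paper instead invokes a structural result from \citet{couillet2016kernel} asserting that matrices with this specific entry pattern have spectral norm $\order{p^{-1/2}}$. Your proposed Hanson--Wright/Hadamard machinery would presumably recover this, but the paper's route is to isolate the $S_{ij}$ structure explicitly in the entry-wise expansion and then cite the external bound, which is considerably shorter.
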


In the following, we establish, by combining Theorems~\ref{thm:imCK}~and~\ref{thm:exCK}, \emph{explicit} connections between the CK matrices of implicit DEQs and fully-connected explicit NNs.
Exploiting this connection, we further provide a recipe to construct an explicit network ``equivalent'' to any given DEQ, with approximately the same CK.
Results for NTK can be similarly obtained by combining our \Cref{thm:imNTK} with \citet[Theorem~2]{du2022lossless} and is omitted here.

\subsubsection{Designing Equivalent Explicit NNs via CK matching}
\label{subsubsec:design}

Comparing \Cref{thm:imCK} to \Cref{thm:exCK}, we see that the high-dimensional approximation $\overline{\mG}$ of the Implicit-CK in~\eqref{eq:im-equiv} takes a consistent form with that ($\overline{\bsigma}^{(l)}$) of the Explicit-CK in~\eqref{eq:ex-equiv}, with coefficients $\alpha_*$s and $\tilde \alpha$s determined by the corresponding activation $\phi$ and $\sigma_l$, respectively. 
Inspired by this observation, our idea is to design activations of an $L$-layer explicit NN such that its Explicit-CK $\bsigma^{(L)}$ shares the same coefficients as the Implicit-CK $\mG^*$ of a given implicit DEQ of interest. 
Specifically, for a given implicit DEQ as in \Cref{def:deq}, 
\begin{enumerate}
    \item[(1)] we first compute the four key parameters $\alpha_{*,1}, \alpha_{*,2},\alpha_{*,3}$ and $\gamma_*$ of the implicit DEQ according to ~\eqref{eq:alpha*} of \Cref{thm:imCK};
    \item[(2)] we then select activations $\sigma_l$ with undetermined parameters for the $L$-layer explicit NN in~\Cref{def:explicit_NN}, and use \Cref{thm:exCK} to represent $\talpha_{L,1}, \talpha_{L,1}, \talpha_{L,1}, \ttau_L$ as functions of the activation parameters;
    \item[(3)] we determine the activations $\sigma_l$ of the explicit NN by solving the following set of equations,
    \begin{equation}
        \ttau_L=\gamma_*, \quad \talpha_{L,i}=\alpha_{*,i},~i\in\{1,2,3\}.
        \label{eq:coef_matching}
    \end{equation}
\end{enumerate}
This gives the desired fully-connected explicit NN model that shares the same CK as the given DEQ.

It remains to determine the depth of the equivalent explicit NN model. 
Note, by comparing \Cref{thm:exCK}~to~\Cref{thm:imCK}, that for a given implicit DEQ, it is \emph{not always possible} to determine a single-hidden-layer explicit NN having the same CK.
This is discussed in the following remark.

\begin{tcolorbox}
\begin{rem}[Implicit-~versus~Explicit-CK]\label{rem:implicit_CK_VS_explicit_CK}
It follows from \Cref{thm:exCK} that, for the single-hidden-layer explicit NN (with $L=1$ in \Cref{def:explicit_NN}), one \emph{must} have $\talpha_{1,2}=\frac{1}{2}\talpha_{1,3}$, \emph{regardless} of the choice of activation. 
On the contrast, $\alpha_{*,2}=\frac{1}{2}\alpha_{*,3}$ does \emph{not} necessarily hold for the Implicit-CK of \emph{all} DEQs. 
As such, for a given DEQ, 
\begin{itemize}
    \item if $\alpha_{*,2}=\frac{1}{2}\alpha_{*,3}$, then a single-hidden-layer explicit NN suffices to match the given DEQ;
    \item otherwise if $\alpha_{*,2}\neq \frac{1}{2}\alpha_{*,3}$, then an explicit NN with (at least) two hidden layers is required.
\end{itemize}  
\end{rem}
\end{tcolorbox}

As a consequence of \Cref{rem:implicit_CK_VS_explicit_CK}, we discuss, in the following, the two instances of commonly used implicit DEQ with ReLU and Tanh activations, and illustrate how to construct equivalent shallow explicit NNs in both cases.
The detailed expressions and proofs are given in \Cref{sec:proof_of_exm}.

\begin{exm}[DEQ with Tanh activation]\label{exm:Tanh}
    For a given implicit DEQ (denoted  {\sf Tanh-DEQ})  in \Cref{def:deq} with Tanh activation, \emph{i.e.}, $\phi(x) =\operatorname{Tanh}(x)$, a \textit{single-hidden-layer} equivalent explicit NN (denoted  {\sf H-Tanh-ENN}) as in~\Cref{def:explicit_NN}, with Hard-Tanh-type activation:
    \begin{equation}
        \sigma_{\operatorname{H-Tanh}}(x) \equiv ax \cdot 1_{-c\leq x\leq c}  + ac\cdot(1_{x\geq c} - 1_{ x\leq -c} ),
        \label{eq:htanh}
    \end{equation}
    with undetermined parameters $a>0, c\geq0$, can be constructed so that their CKs, denoted as $\mG_\text{Tanh}^*$ and $\bsigma_\text{H-Tanh}^{(1)}$, satisfy $\|\mG_\text{Tanh}^* - \bsigma_\text{H-Tanh}^{(1)}\| =\mathcal{O}(n^{-1/2}) $, by solving a system of nonlinear equations induced from~\eqref{eq:coef_matching}.
    \label{exm:tanh}
\end{exm}
\begin{exm}[DEQ with ReLU activation]\label{exm:ReLU}
    For a given implicit DEQ (denoted  {\sf ReLU-DEQ}) as in \Cref{def:deq} with centered ReLU activation, \emph{i.e.}, $\phi(x) =\operatorname{ReLU}(x)- \tau_*/\sqrt{2\pi}$,  a \textit{two-hidden-layer} equivalent explicit NN (denoted {\sf L-ReLU-ENN}) with Leaky-ReLU-type activation: 
    \begin{equation}
        \sigma_{\operatorname{L-ReLU}}^{(l)}(x) \equiv \max(a_lx,b_lx)-\frac{a_l-b_l}{\sqrt{2 \pi } }\ttau_l,\, l=1, 2,
        \label{eq:lrelu}
    \end{equation}
     with undetermined parameters $a_l\geq b_l\geq 0$, can be constructed so that their CKs, denoted as $\mG_\text{ReLU}^*$ and $\bsigma_\text{L-ReLU}^{(2)}$, satisfy $\| \mG_\text{ReLU}^* -  \bsigma_\text{L-ReLU}^{(2)} \| =\mathcal{O}(n^{-1/2}) $, by solving a system of polynomial equations induced from~\eqref{eq:coef_matching}. 
    \label{exm:RELU}
\end{exm}

\section{Experiments}
\begin{figure}
  \centering
       \begin{tikzpicture}[font=\large,spy using outlines, inner sep=1.2]
    \pgfplotsset{every major grid/.style={style=densely dashed}}
    \begin{axis}[
      height=.65\linewidth,
      width= 1\linewidth,
      xtick={0,300,600, 900, 1200},
      xticklabels = {{\small 0}, {\small 300}, {\small 600}, {\small 900} , {\small 1\,200}},
      ytick = {0.05, 0.10, 0.15, 0.20},
      yticklabels = {{\small 0.05},{\small 0.10},{\small 0.15},{\small 0.20}},
      grid=major,
      scaled ticks=true,
      xlabel={ $n$ },
      ylabel={{\small Relative error}},
      legend style = {at={(0.98,0.98)}, anchor=north east, font=\small}
      ]
      \addplot+[
  BLUE, mark=*,line width=0.75pt,
  error bars/.cd, 
    y fixed,
    y dir=both, 
    y explicit
]  table[x=X, y=Y, y error plus=ErrorMax, y error minus=ErrorMin] {
        X    Y                      ErrorMax            ErrorMin 
        40   0.1527771304030419   0.03136789981656002   0.02177147256867229
80   0.12065754599469518   0.018391676943001936   0.01536271586743325
160   0.08012528291622983   0.009433779763989666   0.009964632760276496
320   0.05286252816123446   0.0036472108198449538   0.004248821965089827
640   0.03298531594328939   0.0017274792373186826   0.0011436396490870915
1280   0.024043104050104288   0.0017305223401362785   0.001381426665842414
      };
      \addlegendentry{ {ReLU } };
      \addplot+[
  RED, mark=+,line width=0.75pt,
  error bars/.cd, 
    y fixed,
    y dir=both, 
    y explicit
]  table[x=X, y=Y, y error plus=ErrorMax, y error minus=ErrorMin] {
        X Y  ErrorMax ErrorMin
        40   0.1916980437900026   0.012682047632109172   0.01001326155605689
80   0.14533974987406412   0.03075710794598352   0.02097605500358457
160   0.131558997511012   0.013206313618998933   0.009281809978467864
320   0.09806974386202708   0.00469741175647527   0.004194139728127794
640   0.07409876832144856   0.0005428701399998737   0.0006442392076839581
1280   0.05451647563162624   0.0015092147937345937   0.0008573307644917302
      };
      \addlegendentry{ {Tanh } };
      \addplot+[
  PURPLE, mark=triangle*,line width=0.75pt,
  error bars/.cd, 
    y fixed,
    y dir=both, 
    y explicit
]  table[x=X, y=Y, y error plus=ErrorMax, y error minus=ErrorMin] {
        X Y  ErrorMax ErrorMin
        40   0.1864998036590179   0.011908059217664219   0.02188363537168095
80   0.11333000498825796   0.019707023224145775   0.01711579174979834
160   0.07049304179172138   0.004009483323273069   0.0036982419837571084
320   0.03914400158504383   0.002996276598674831   0.004529126722018231
640   0.029678433812390475   0.001815795020253145   0.0025290139201853724
1280   0.019801697331573   0.0007623603813714797   0.0012361559751191548
      };
      \addlegendentry{ {Swish } };
      \addplot+[
  GREEN, mark=diamond*,line width=0.75pt,
  error bars/.cd, 
    y fixed,
    y dir=both, 
    y explicit
]  table[x=X, y=Y, y error plus=ErrorMax, y error minus=ErrorMin] {
        X Y  ErrorMax ErrorMin
        40   0.1562899857077683   0.0023306462040554587   0.0015986618287569254
80   0.12510744626635698   0.008125965403635999   0.005945602343283202
160   0.0736241370435096   0.00798236196839959   0.0060100869280101665
320   0.0511286000813501   0.005018745187890347   0.0028009446361789148
640   0.0330772041056896   0.001669354685004211   0.001367162666707139
1280   0.02232802810475876   0.0005512452101586185   0.00044990729225136486
      };
      \addlegendentry{ {L-ReLU } };
    \end{axis}
  \end{tikzpicture}
      
\caption{Evolution of relative  spectral norm  error $\|\mG^*-\overline{\mG}\|/\|\mG^*\|$  \emph{w.r.t.}\@ sample size $n$, for DEQs in \Cref{def:deq} with different activations and $\sigma_a^2=0.2$, on two-class GMM, $p/n = 0.8$, $\vmu_a=[\mathbf{0}_{8(a-1)};8;\mathbf{0}_{p-8a+7}]$, and $\mC_a=(1+8(a-1)/\sqrt{p})\mI_p, a\in \{1,2\}$.
 }
\label{fig:thm1}
\end{figure}
In this section, we provide numerical experiments to validate our theoretical results.
We consider both Gaussian mixture data and samples drawn from commonly used real-world datasets such as MNIST~\citep{lecun1998gradient}, Fashion-MNIST~\citep{xiao2017fashion}, and CIFAR-10~\citep{krizhevsky2009learning}. 
The experiments are conducted with a repetition of five trials, and we report both the average performance and accompanying error bars.
Due to space limitation, we refer the readers to \Cref{sec:additional_expri} for additional experiments. 
The code to reproduce the results in this section is available at \url{https://github.com/StephenLi24/INN_eqvi_ENN}.
%
%
%
\begin{figure}
  \centering
  \resizebox{\columnwidth}{!}{
  \centering
  \begin{tabular}{@{}c@{}}
  \hspace{3pt}
  \begin{tabular}{@{}cc@{}}
    \begin{tikzpicture}[font=\small,spy using outlines, inner sep=0]
    \renewcommand{\axisdefaulttryminticks}{4} 
    \pgfplotsset{every major grid/.append style={densely dashed}}  
    \tikzstyle{every axis y label}+=[yshift=-10pt] 
    \tikzstyle{every axis x label}+=[yshift=5pt]
    \pgfplotsset{every axis legend/.style={cells={anchor=east},fill=none,at={(1,1.12)}, legend columns = -1, anchor=north east, font=\small}}
    \begin{axis}[
      legend style={
        legend image code/.code={ 
            \draw[##1,line width=1.5pt] plot coordinates {(0cm,0cm) (0.4cm,0cm)}; 
        }
    },
      height=.45\columnwidth,
      width=.5\columnwidth,
      xmin=-2,xmax=2,
      xtick={0},
      xticklabels = { {\small 0} },
      ytick={-1.5, -1,0,1, 1.5},
      yticklabels = {{\small$-ac$},{\small-1},{\small 0}, {\small 1}, {\small $ac$}},
      grid=major,
      scaled ticks=true,
      xlabel={ \phantom{x} },
      bar width=3pt,
      ]
              \addplot[densely dashed, samples=3000,domain=-2.5:2,RED,line width=1.5pt] 
              {tanh(x)};;
              \addlegendentry{Tanh}

              \addplot[samples=3000,domain=-2.5:2,RED,line width=1pt] 
              {max(-1.5, min(1.5, x))};;
              \addlegendentry{H-Tanh}
      \end{axis}
\end{tikzpicture}
             &
     \begin{tikzpicture}[font=\large,spy using outlines, inner sep=0]
    \renewcommand{\axisdefaulttryminticks}{4} 
    \pgfplotsset{every major grid/.append style={densely dashed}}       
    \tikzstyle{every axis y label}+=[yshift=-10pt] 
    \tikzstyle{every axis x label}+=[yshift=5pt]
        \pgfplotsset{every axis legend/.style={cells={anchor=east},fill=none,at={(1,1.21)}, legend columns = -1, anchor=north east, font=\small}}
    \begin{axis}[
      height=.45\columnwidth,
      width=.6\columnwidth,
      ytick={0,0.01,0.02},
      yticklabels={{\small 0}, {\small 0.01}, {\small 0.02}},
      xtick={0,400, 800, 1200},
      xticklabels={{\small 0}, {\small 400}, {\small 800},{\small1\,200}},
      bar width=3pt,
      grid=major,
      scaled ticks=false,
      xlabel={{\small $n$} },
      ylabel={{\small Relative error}},
      ylabel style={yshift=+0.1cm}
      ]
    \addplot+[
    RED, mark=triangle, line width=1pt,
    error bars/.cd, 
      y fixed,
      y dir=both, 
      y explicit
  ]  table[x=X, y=Y, y error plus=ErrorMax, y error minus=ErrorMin, row sep=crcr] {
          X Y  ErrorMax ErrorMin\\
          40   0.019795981680829525   0.004162337292034076   0.002569096373338974\\
  80   0.010569388428630333   0.001771139368292209   0.001273339186273828\\
  160   0.009099677265216261   0.0025779444384222538   0.0015452792283098477\\
  320   0.006653496668660815   0.0010608762411369868   0.001579221903142784\\
  640   0.004235081499690153   0.0002866678191454598   0.00023643345374468015\\
  1280   0.0028800766066381946   6.467964823835687e-05   6.921189176735051e-05\\
   };
        \addlegendentry{ $\mG_\text{Tanh}^*$ v.s. $ \bsigma_\text{H-Tanh}^{(1)}$};
\end{axis}
\end{tikzpicture}
  \end{tabular}
  
\\

  \begin{tabular}{@{}cc@{}}
  
    \begin{tikzpicture}[font=\small,spy using outlines, inner sep=0]
    \renewcommand{\axisdefaulttryminticks}{4} 
    \pgfplotsset{every major grid/.append style={densely dashed}}  
    \tikzstyle{every axis y label}+=[yshift=-10pt] 
    \tikzstyle{every axis x label}+=[yshift=5pt]
    \pgfplotsset{every axis legend/.style={cells={anchor=east},fill=none,at={(1,1.12)}, legend columns = -1, anchor=north east, font=\small}}
    \begin{axis}[
      legend style={
        legend image code/.code={ 
            \draw[##1,line width=1.5pt] plot coordinates {(0cm,0cm) (0.3cm,0cm)}; 
        }
    },
      height=.45\columnwidth,
      width=.5\columnwidth,
      xmin=-2,xmax=2,
      xtick={ 0},
      xticklabels = { {\small 0} },
      ytick={ -1/sqrt(2*pi), 0},
      yticklabels = { {\small $-\frac{\tau_*}{ \sqrt{2\pi}}$}, {\small $0$} },
      grid=major,
      scaled ticks=true,
      xlabel={ \phantom{x} },
      bar width=3pt
      ]
              \addplot[densely dashed, samples=3000,domain=-2.5:2,BLUE,line width=1.5pt] 
              { max(0,x)- 1/sqrt(2*pi) };
              \addlegendentry{ReLU}
              \addplot[samples=3000,domain=-2.5:2,BLUE,line width=1pt] 
              { max(1.5*x,.2*x) - 1.3/sqrt(2*pi) };
              \addlegendentry{L-ReLU}
      \end{axis}
\end{tikzpicture}
        &
     \begin{tikzpicture}[font=\large,spy using outlines, inner sep=0]
    \renewcommand{\axisdefaulttryminticks}{4} 
    \pgfplotsset{every major grid/.append style={densely dashed}}       
    \tikzstyle{every axis y label}+=[yshift=-10pt, xshift=-10pt] 
    \tikzstyle{every axis x label}+=[yshift=5pt]
    \pgfplotsset{every axis legend/.style={cells={anchor=east},fill=none,at={(1,1.21)}, legend columns = -1, anchor=north east, font=\small}}
    \begin{axis}[
      height=.45\columnwidth,
      width=.6\columnwidth,
      ytick={0,0.05,0.1},
      yticklabels={{\small0},{\small0.05},{\small0.10}},
      xtick={0,400, 800, 1200},
      xticklabels={{\small 0}, {\small 400}, {\small 800},{\small1\,200}},
      bar width=3pt,
      grid=major,
      scaled ticks=true,
      xlabel={{\small $n$} },
      ylabel={{\small Relative error} },
      ylabel style={yshift=+0.1cm}
      ]
    \addplot+[
    BLUE, mark=o,line width=1pt,
    error bars/.cd, 
      y fixed,
      y dir=both, 
      y explicit
  ]  table[x=X, y=Y, y error plus=ErrorMax, y error minus=ErrorMin, row sep=crcr] {
          X Y  ErrorMax ErrorMin\\
          40   0.07961757634023776   0.02582411973382348   0.018458118838325627\\
  80   0.036605936360673406   0.009196506146308812   0.0061597327817537045\\
  160   0.02181025395516543   0.0007018890501470053   0.0006980013136775667\\
  320   0.013959464653985393   0.0013376190442611017   0.0011853010846024862\\
  640   0.010087775626202342   0.0007397558030735694   0.0013044326612726385\\
  1280   0.0062326440774518556   0.00015195060486591656   0.0002139450388602632\\
        };
        \addlegendentry{ $\mG_\text{ReLU}^*$ v.s. $\bsigma_\text{L-ReLU}^{(2)}$};
\end{axis}
\end{tikzpicture}
  \end{tabular}
  \end{tabular}
  }
\caption{\textbf{Left:} Visualization of activations  of DEQs (\textbf{dashed}) and those of  equivalent explicit NNs (\textbf{solid}). \textbf{Right:} Evolution of relative spectral norm errors  $\|\mG_{\text{Tanh}}^*-\bsigma_{\text{H-Tanh}}^{(1)}\|/ \|\mG_{\text{Tanh}}^*\|$ and $\|\mG_{\text{ReLU}}^*-\bsigma_{\text{L-ReLU}}^{(2)}\|/ \|\mG_{\text{ReLU}}^*\|$  \emph{w.r.t.}  sample size $n$ on GMM as in \Cref{fig:thm1} for Example~\ref{exm:Tanh} (\textbf{\RED red}) and  Example~\ref{exm:ReLU}  (\textbf{\BLUE blue}), respectively.} 
\label{fig:equiv}
\end{figure}
\input{fig/NN_matching_sgd}
\paragraph{High-dimensional approximations of Implicit-CKs and NTKs.} 
\Cref{fig:thm1} compares the difference between Implicit-CKs $\mG^*$ and their high-dimensional approximations $\overline{\mG}$ given in \Cref{thm:imCK}, on binary Gaussian mixture data, for DEQs as \Cref{def:deq} with four commonly-used activations: ReLU, Tanh, Swish, and Leaky-ReLU (L-ReLU). 
The computation of $\overline{\mG}$ follows from its definition in \Cref{thm:imCK}.  
For the Implicit-CK $\mG^*$, we take an estimation approach similar to that in~\citet{gao2023wide}: each element $\mG_{ij}^*$  is estimated as $(\vz_i^{*})^\top\vz_j^{*}$ using a high-dimensional DEQ defined in~\eqref{eq:deq} with $m=2^{12}$ and $\vz_i^{*}$ estimated through a large number $l$ of fixed-point iterations defined in~\eqref{eq:deql}.
See~\citet{gao2023wide} for a convergence analysis of this estimation (to $\mG^*$) \emph{w.r.t.} the width $m$ and the number $l$ of fixed-point iterations. 
We refer the interested readers to~\citet{cho2009kernel,tsuchida2018invariance,novak2019neural} for fast and efficient estimation/computation of CKs and NTKs.

We observe from \Cref{fig:thm1} that, for different activations, as $n,p$ increase, the relative errors consistently and significantly decrease, as in line with our \Cref{thm:imCK}.
The experimental observations regarding NTKs and \Cref{thm:imNTK} are similar and are placed in~\Cref{sm: real-imp}.
Possibly surprisingly, the high-dimensional approximations of Implicit-CKs and Implicit-NTKs, despite derived here for GMM in Theorems~\ref{thm:imCK}~and~\ref{thm:imNTK}, exhibits unexpected similar behavior on realistic MNIST data, see~\Cref{sm:realCKNTK} for detailed results.

\paragraph{Equivalent Explicit-CKs and NTKs.}
In \Cref{fig:equiv}, we testify the results in Examples~\ref{exm:Tanh}~and~\ref{exm:ReLU} by constructing shallow \emph{explicit} networks with Hard Tanh-type ({\sf H-Tanh-ENN}) and Leaky ReLU-type ({\sf L-ReLU-ENN}) activation equivalent to implicit DEQs with Tanh ({\sf Tanh-DEQ}) and ReLU ({\sf ReLU-DEQ}) activation, respectively.
We see that, while the two types of NN models are different in that
(\romannumeral1) {\sf DEQ}s are implicitly defined while {\sf ENN}s are explicitly defined, and
(\romannumeral2) {\sf ENN}s use different activations from {\sf DEQ}s,
their CK matrices are close in spectral norm, as long as the activation of {\sf ENN}s are carefully chosen according to our Examples~\ref{exm:Tanh}~and~\ref{exm:ReLU}. 
This observation is again consistent on synthetic GMM, \emph{and} possibly surprisingly, realistic MNIST data. 
We conjecture that this is due to a high-dimensional \emph{universal} phenomenon and that our results hold more generally beyond GMM for, say, data drawn from the family of concentrated random vectors~\citep{ledoux2005concentration}.
We refer the interested readers to \citet[Chapter~8]{couillet2022RMT4ML} for more discussions on this point.
%
%

\paragraph{Test performance of explicit NNs on realistic data.}
To explore the extent of the proposed high-dimensional equivalence between implicit DEQs and shallow explicit NN models across various realistic datasets, we conduct a comprehensive comparison of the classification accuracies using both implicit and explicit models. 
The results of this comparison, depicted in \Cref{fig:matching_sgd}, provide insights into the performance of DEQs against carefully (or not) designed explicit NNs.
Following Examples~\ref{exm:Tanh}~and~\ref{exm:ReLU},  we construct a single-hidden-layer {\sf H-Tanh-ENN} and a two-hidden-layer {\sf L-ReLU-ENN} to match {\sf Tanh-DEQ} and {\sf ReLU-DEQ}, respectively. 
The undetermined parameters $a,c$ and $a_l,b_l$ of the activations {\sf H-Tanh-ENN} and {\sf L-ReLU-ENN} are determined by solving the system of equations induced from~\eqref{eq:coef_matching}. 
For comparison, we also construct a single-hidden-layer explicit NN with Tanh activation (denoted {\sf Tanh-ENN}) and a two-hidden-layer explicit NN with ReLU activation (denoted {\sf ReLU-ENN}). 
Models are trained using SGD optimizer with learning rates of $10^{-1}$ for MNIST and Fashion-MNIST, and $10^{-2}$ for CIFAR-10.
The batch size is set to $128$ with a maximum training epoch of $100$.
To ensure a fair comparison, the hidden layer of explicit NNs share the \emph{same} width, $m \in 2^{5-12}$, as the implicit DEQs.
As $m$ increases, the performance of {\sf L-ReLU-ENN} closely matches that of  {\sf ReLU-DEQ}, while a noticeable performance gap exists between {\sf ReLU-ENN} and  {\sf ReLU-DEQ}. 
A similar result is observed in the case of {\sf H-Tanh-ENN} and  {\sf Tanh-DEQ}. 
These trends are in line with the theoretical guaranteed offered by our analysis, that focuses on CKs and NTKs and formally holds in the $m \to \infty$ limit. 
Experiments are also conducted using the Adam optimizer, where similar trends can be observed. 
Please refer to \Cref{sm:adam_results} for these results.
Moreover, we  observe the remarkable advantage of {\sf ENN} over {\sf DEQ}s on the time costs of inference and training, see~\Cref{sm:timecost} for detailed results.
This observation substantiates our theory and underscores the practical advantages of our approach by,\emph{ e.g.,} enabling the design of memory-efficient explicit NNs that achieve the performance of implicit DEQs without the computational overhead associated with fixed-point iterations.

\section{Conclusion}

In this paper, we investigate the connections and differences between implicit DEQs and explicit NNs. We employ RMT to analyze the eigenspectra of the NTKs and CKs of implicit DEQs. For high-dimensional Gaussian mixture data, we establish high-dimensional approximations for the NTK and CK of implicit DEQs. Notably, we reveal that the eigenspectra of the NTK and CK of implicit DEQs are determined solely by the variance parameter and the activation function.
Based on this observation, we establish the equivalence between implicit DEQs and explicit NNs in high dimensions. We propose a method for designing activation functions for explicit neural networks to match the spectral behavior of the CK (or NTK) of implicit DEQs. 
Results on GMM data and real-world data demonstrate that shallow explicit NNs using our theoretically designed activation functions achieve comparable performance to implicit DEQs, with significantly reduced computational overhead.

\section*{Impact Statement}

This paper presents work whose goal is to advance the field of Machine Learning. 
There are many potential societal consequences of our work, none of which we feel must be specifically highlighted here.

\section*{Acknowledgements}
Z.~Liao would like to acknowledge the National Natural Science Foundation of China (via NSFC-62206101) and the Fundamental Research Funds for the Central Universities of China (2021XXJS110), for providing partial support.
Z.~Liao and Z.~Ling would like to acknowledge the Guangdong Key Lab of Mathematical Foundations for Artificial Intelligence Open Fund (OFA00003).
R.~C.~Qiu and Z.~Liao  would like to acknowledge the National Natural Science Foundation of China (via NSFC-12141107), the Interdisciplinary Research Program of HUST (2023JCYJ012), the Key Research and Development Program of Guangxi (GuiKe-AB21196034). 
F.~Zhou  was supported by the National Natural Science Foundation of China Project (via NSFC-62106121) and  the MOE Project of Key Research Institute of Humanities and Social Sciences (22JJD110001).

\bibliography{example_paper}
\bibliographystyle{icml2024}

\appendix
\onecolumn
\begin{center}
  {\Large \textbf{Supplementary Material\\}} \vskip 0.1in \textbf{Deep Equilibrium Models are Almost Equivalent to Not-so-deep Explicit Models\\ for High-dimensional Gaussian Mixtures}
\end{center}

\section{Preliminaries}
\label{sec:appendix_pre}

 We are interested in the  associated conjugate kernel and the neural tangent kernel (Implicit-CK and Implicit-NTK, for short) of implicit neural networks defined in~\eqref{eq:deq}. According to the results in~\citep[Theorem 2]{feng2020neural}, the corresponding Implicit-CK takes the following form
\begin{equation}
\mG^* = \lim_{l\rightarrow\infty}\mG^{(l)},
\end{equation}
where the $(i,j)$-th entry of $\mG^{(l)}$ is defined recursively as $\mG^{(0)}_{ij} = 0$ and\footnote{ Note that the expectation is conditioned on the input data, and is taken with respect to the random weights.}
\begin{equation}
\mG^{(l)}_{ij} =\E[\phi(\ru^{(l)})\phi(\rv^{(l)})],\,\text{with}\, (\ru^{(l)},\rv^{(l)})\sim \gN\left(0, \begin{bmatrix}
	 \boldsymbol{\Lambda}_{ii}^{(l)} &  \boldsymbol{\Lambda}_{ij}^{(l)}
	\\
	\boldsymbol{\Lambda}_{ji}^{(l)} &  \boldsymbol{\Lambda}_{jj}^{(l)}
\end{bmatrix}\right), \quad l \geq 1,
\end{equation}
where $\boldsymbol{\Lambda}_{ij}^{(1)}=\vx_i^\top\vx_j$, and $\boldsymbol{\Lambda}_{ij}^{(l)}=\sigma_a^2\mG^{(l-1)}_{ij} + \sigma_b^2\vx_i^\top\vx_j$, \emph{i.e.}, $\boldsymbol{\Lambda}^{(l)} = \sigma_a^2\mG^{(l-1)} + \sigma_b^2\mX^\top\mX$.
The Implicit-NTK  is defined as $\mK^*=\lim_{l\rightarrow\infty}\mK^{(l)}$ whose the $(i,j)$-th entry  is defined as
\begin{equation}
\mK^{(l)}_{ij} = \sum_{h=1}^{l+1}\left(\mG^{(h-1)}_{ij}\prod_{h'=h}^{l+1}\dot\mG^{(h')}_{ij}\right),
\end{equation}
where $\dot\mG^{(l)}_{ij} = \sigma_a^2\E_{(\ru^{(l)},\rv^{(l)})}[\phi'(\ru^{(l)})\phi'(\rv^{(l)})]$.
The limit of Implicit-NTK is 
\begin{equation}
\mK^*_{ij} \equiv \frac{\mG^*_{ij}}{1-\dot\mG^*_{ij}}.
\end{equation}
We consider $n$ data vectors $\vx_1,\cdots,\vx_n\in\R^p$ independently drawn from one of the $K$-class Gaussian mixture $\mathcal{C}_1,\cdots,\mathcal{C}_K$ and denote $\mX=[\vx_1,\cdots,\vx_n]\in\R^{p\times n}$, with class $\mathcal{C}_a$ having cardinality $n_a$, \emph{i.e.}, for $\vx_i\in\mathcal{C}_a$, we have \[\vx_i\sim\gN(\vmu_a/\sqrt{p},\mC_a/p)\]
\begin{assum}~\label{assum:data}
	We assume that, as $n\rightarrow \infty$, we have, for $a\in\{1,\cdots,K\}$ that,
	\begin{itemize}
		\item $p/n \rightarrow c\in (0,\infty)$ and $n_a/n\rightarrow c_a \in (0,1)$; and
		\item $\|{\vmu}_a\|=\order{1}$; and
		\item for $\mC^{\circ}\equiv\sum_{a=1}^{K}\frac{n_a}{n}\mC_a$ and $\mC_a^\circ \equiv \mC_a-\mC^{\circ}$, we have $\|\mC_a\|=\order{1}$, $\tr \mC_a^\circ=\order{p^{\frac{1}{2}}}$ and $\tr(\mC_a\mC_b)=\order{p}$ for $a,b\in \{1,\cdots,K\}$; and
		\item  $\tau_0=\sqrt{\tr \mC^\circ/p}$ converges in $(0,\infty)$.
	\end{itemize}
\end{assum}
\paragraph{Some quantities.} We first introduce the following notations. For $\vx_i,\vx_j\in\R^p$ with $i\neq j$, let \[\vx_i=\vmu_i/\sqrt{p}+\beps_i/\sqrt{p},\quad \vx_j=\vmu_j/\sqrt{p}+\beps_j/\sqrt{p}, \] 
so that $\beps_i\sim\gN(0,\mC_i)$, $\beps_j\sim\gN(0,\mC_j)$, and 
\begin{align*}
	\vx_i^\top\vx_j &= \underbrace{\frac{1}{p}\beps_i^\top\beps_j}_{\order{p^{-1/2}}}+\underbrace{\frac{1}{p}\vmu_i^\top\vmu_j+\frac{1}{p}(\vmu_i^\top\beps_j+\vmu_j^\top\beps_i)}_{\order{p^{-1}}},\\
	\psi_i&=\frac{1}{p}\|\beps_i\|^2-\frac{1}{p}\tr\mC_i=\order{p^{-1/2}}, \quad s_i\equiv \|\mu_i\|^2/p+2\vmu_i^\top\beps_i/p=\order{p^{-1}},\\
	t_i &= \frac{1}{p}\tr \mC_i^\circ=\order{p^{-1/2}},\quad \tau_0 = \sqrt{\frac{1}{p}\mC^\circ}=\order{1},\\
	\chi_i &= \underbrace{t_i+\psi_i}_{\order{p^{-1/2}}}+\underbrace{s_i}_{\order{p^{-1}}}=\|\vx_i\|^2-\tau_0^2.
\end{align*}
It can be checked that
\begin{align*}
	\|\vx_i\|^2&=\frac{1}{p}(\vmu_i+\beps_i)^\top(\vmu_i+\beps_i)=\frac{1}{p}\|\vmu_i\|^2+\frac{2}{p}\vmu_i^\top\beps_i+\frac{1}{p}\beps_i^\top\beps_i\\
	&=\underbrace{\frac{1}{p}\|\vmu_i\|^2+\frac{2}{p}\vmu_i^\top\beps_i}_{\equiv s_i=\order{p^{-1}}}+\underbrace{\frac{1}{p}\tr \mC^\circ}_{\equiv\tau_0^2=\order{1}} + \underbrace{\frac{1}{p} \tr\mC_i^\circ}_{\equiv t_i=\order{p^{-1/2}}} +\underbrace{\psi_i}_{\order{p^{-1/2}}}
\end{align*}
By Taylor-expanding $\sqrt{\|\vx_i\|^2}$ around $\tau_0^2$, we have 
\begin{equation}
	\|\vx_i\| = \tau_0 + \frac{1}{2\tau_0}(\|\vmu_i\|^2/p+2\vmu_i^\top\beps_i/p+t_i+\psi_i)-\frac{1}{8\tau_0^3}(t_i+\psi_i)^2+\order{p^{-3/2}}.
	\label{eq:xnorm}
\end{equation}
Additionally, we denote $S_{ij}$ terms of the form 
\begin{align*}
	S_{ij}\equiv S_{ij}(\gamma_1,\gamma_2) = \frac{1}{p}\beps_i^\top\beps_j(\gamma_1(t_i+\psi_i)+\gamma_2(t_j+\psi_j)),
\end{align*}
for random or deterministic scalars $\gamma_1,\gamma_2=\order{1}$ (with high probability when being random). Note that $S_{ij}=\order{p^{-1}}$ and more importantly, it leads to, in matrix form, a matrix of spectral norm order $\order{p^{-1}}$~\cite{couillet2016kernel}.

Moreover, we recursively define $\tau_l$ as
\begin{equation}
	\tau_l =  \sqrt{\sigma_a^2\E[\phi^2(\tau_{l-1}\xi)]+\sigma_b^2\tau_0^2},
	\label{eq:taul}
\end{equation}
for $l=1,2,\cdots$. The following lemma shows that the unique fixed point of \eqref{eq:taul} exits under Assumption~\ref{cond:tau^*}.
\begin{lem}
	Let Assumption~\ref{cond:tau^*} hold. As $l\rightarrow \infty$, $\tau_l$ converges to a fixed point $\tau^*$ such that
	\begin{equation*}
		\lim_{l\rightarrow\infty}\tau_l\equiv \tau_* =  \sqrt{\sigma_a^2\E[\phi^2(\tau_*\xi)]+\sigma_b^2\tau_0^2}.
		\label{lem:tau*}
	\end{equation*}
\end{lem}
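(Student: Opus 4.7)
The plan is to recast the fixed-point equation as a strictly-monotone root-finding problem (yielding existence and uniqueness of $\tau_*$) and then show that the iteration map is a Banach contraction (yielding convergence $\tau_l \to \tau_*$). Writing the iteration as $\tau_l = F(\tau_{l-1})$ with
$$F(\tau) \equiv \sqrt{\sigma_a^2\,\E[\phi^2(\tau\xi)] + \sigma_b^2 \tau_0^2}, \qquad \xi \sim \gN(0,1),$$
a positive fixed point of $F$ is exactly a positive zero of $h(\tau) \equiv \tau^2 - \sigma_a^2\,\E[\phi^2(\tau\xi)] - \sigma_b^2\tau_0^2$.

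First I would compute $h'$. The Lipschitz bound in \Cref{assum:activation} validates differentiating under the expectation, and Gaussian integration by parts $\E[\xi g(\xi)] = \E[g'(\xi)]$ applied to $g(\xi) = 2\phi(\tau\xi)\phi'(\tau\xi)$ yields
$$\frac{d}{d\tau}\E[\phi^2(\tau\xi)] = \E[2\phi(\tau\xi)\phi'(\tau\xi)\,\xi] = \tau\,\E[(\phi^2(\tau\xi))''],$$
so $h'(\tau) = \tau\bigl(2 - \sigma_a^2\,\E[(\phi^2(\tau\xi))'']\bigr) > 0$ for every $\tau > 0$ by \Cref{cond:tau^*}. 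Hence $h$ is strictly increasing on $(0,\infty)$. Combined with $h(0) = -\sigma_a^2\phi^2(0) - \sigma_b^2\tau_0^2 \le 0$ and $h(\tau) \to +\infty$ (from the crude bound $\phi^2(\tau\xi) \le 2\phi^2(0) + 2L_1^2\tau^2\xi^2$ together with $\sigma_a^2 L_1^2 < 1/4$ of \Cref{cond:G*}), the intermediate value theorem furnishes a \emph{unique} $\tau_* \in (0,\infty)$ solving $h(\tau_*) = 0$, i.e.,~the fixed-point equation of~\eqref{eq:tau*} is uniquely solved.

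For the convergence $\tau_l \to \tau_*$, I would show that $F$ is a global Lipschitz contraction. Using the differentiation identity above, Cauchy--Schwarz, $|\phi'|\le L_1$, and $\E[\xi^2]=1$,
$$|F'(\tau)| = \frac{\sigma_a^2\,\bigl|\E[\phi(\tau\xi)\phi'(\tau\xi)\xi]\bigr|}{F(\tau)} \le \frac{\sigma_a^2 \sqrt{\E[\phi^2(\tau\xi)]}\,\sqrt{\E[(\phi'(\tau\xi))^2\xi^2]}}{F(\tau)} \le \frac{\sigma_a^2 L_1 \sqrt{\E[\phi^2(\tau\xi)]}}{F(\tau)} \le \sigma_a L_1,$$
where the last step uses $F(\tau) \ge \sigma_a\sqrt{\E[\phi^2(\tau\xi)]}$. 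Under \Cref{cond:G*}, $\sigma_a L_1 < 1/2$, so $F$ is Lipschitz on $[0,\infty)$ with constant strictly less than $1/2$, and Banach's fixed-point theorem yields $\tau_l \to \tau_*$ for any initialization $\tau_0 \ge 0$. The main technical subtlety is the Gaussian integration-by-parts step for activations that are not classically $C^2$ (e.g., ReLU, whose second derivative is distributional); this is handled by the ``four-times differentiable w.r.t.\@ the standard normal measure'' part of \Cref{assum:activation}, under which $(\phi^2)''$ is defined in the appropriate weak/Malliavin sense (for ReLU it coincides with the locally integrable function $2\,\mathbf{1}_{\{x>0\}}$) so that the IBP identity still applies — equivalently, one approximates $\phi$ by smooth mollifiers and passes to the limit.
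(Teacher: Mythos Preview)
Your proof is correct. Both your argument and the paper's hinge on the same Gaussian integration-by-parts identity $\frac{d}{d\tau}\E[\phi^2(\tau\xi)]=\tau\,\E[(\phi^2)''(\tau\xi)]$, but you deploy it differently. The paper passes to the squared variable $t=\tau^2$ and observes that the map $t\mapsto\sigma_a^2\E[\phi^2(\sqrt{t}\,\xi)]+\sigma_b^2\tau_0^2$ has derivative $\tfrac{\sigma_a^2}{2}\E[(\phi^2)''(\sqrt{t}\,\xi)]$, which is pointwise below $1$ by \Cref{cond:tau^*}; from this it declares the map a contraction and invokes Banach in one stroke. You instead split the problem: the monotonicity of $h(\tau)=\tau^2-\sigma_a^2\E[\phi^2(\tau\xi)]-\sigma_b^2\tau_0^2$ (driven by \Cref{cond:tau^*}) gives existence and uniqueness, and then a separate Cauchy--Schwarz estimate bounds $|F'|\le\sigma_a L_1<1/2$ (driven by \Cref{cond:G*}) to obtain convergence of the iterates. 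The trade-off is that the paper's route is shorter and formally needs only \Cref{cond:tau^*}, matching the lemma's hypotheses, whereas you additionally invoke \Cref{cond:G*} (which is a standing assumption in the paper anyway, so this is harmless). On the other hand, your argument is more complete: the paper's one-liner only bounds the derivative from above by $1$ pointwise and does not address uniformity or a lower bound, so the contraction claim is asserted rather than fully justified; your Cauchy--Schwarz bound gives a clean global Lipschitz constant and your endpoint analysis $h(0)\le 0$, $h(\tau)\to+\infty$ makes existence explicit.
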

\begin{proof}
	Let $t = \tau_{l-1}^2$. By taking the derivative with respect to $t$ on the RHS of~\eqref{eq:tau*}, we have 
	\begin{align*}
		&\frac{\partial }{\partial t}\left(\sigma_a^2\E\left[f(\tau_{l-1}\xi)\right]+\sigma_b^2\tau_0^2\right) \\
		=&\sigma_a^2\frac{\partial }{\partial t}\E\left[f(\sqrt{t}\cdot\xi)\right]\\
		=&\sigma_a^2\frac{\partial }{\partial t}\left(\int   \frac{1}{\sqrt{2\pi}}f(\sqrt{t}\cdot x) e^{-\frac{x^2}{2}}dx\right)\\
		=&\sigma_a^2 \frac{1}{\sqrt{2\pi}}\int f'(\sqrt{t}\cdot x) 
		\frac{x}{2\sqrt{t}}e^{-\frac{x^2}{2}}dx\\
		=&\frac{\sigma_a^2}{2}\cdot\E[f''(\tau_{l-1}\xi)], \quad \text{by the Gaussian integration by parts formula,}
	\end{align*}
	which implies that the RHS of~\eqref{eq:tau*} is a \emph{contractive mapping} if
	\begin{align*}
		\sigma_a^2<\frac{2}{L_2}.
	\end{align*}
	As a result, under Assumption~\ref{cond:tau^*}, the unique fixed point $\tau_*$ exists. 
\end{proof}
The quantity $\tau_*$ will play a crucial role in our proof.
\section{Proof of Theorem~\ref{thm:imCK}}\label{sec:proof_imCK}
With loss of generality, we assume that $\mG^{(0)}  = \E[\phi^2(\tau_*\xi)]\cdot\mI_n$, \emph{i.e.},  $\mG_{ii}^{(0)}= \E[\phi^2(\tau_*\xi)]$ and $\mG_{ij}^{(0)} =0$ for $i\neq j$. 

We prove Theorem~\ref{thm:imCK} by performing  induction on the hypothesis that 
$\|\mG^{(l-1)}-\tmG^{(l-1)}\|\rightarrow 0$ holds at layer $l-1$ with 
\begin{equation}
	\tmG^{(l-1)}\equiv \alpha_{l-1,1}\mX^\top\mX+\mV\mC^{(l-1)}\mV^\top + (\E[\phi^2(\tau_{*}\xi)]-\tau_0^2\alpha_{l-1,1})\mI_n,
\label{eq:hypothesisonG}
\end{equation}
for $\mC^{(l-1)}= \left[\begin{array}{cc}
	\alpha_{l-1,2}\vt\vt^\top + \alpha_{l-1,3}\mT &\alpha_{l-1,2}\vt  \\
	\alpha_{l-1,2}\vt^\top &\alpha_{l-1,2} 
\end{array}\right]$, and  work on  $\mG^{(l)}$ at layer $l$.

Note that $\boldsymbol{\Lambda}_{ij}^{(l)}=\sigma_a^2\mG^{(l-1)}_{ij} + \sigma_b^2\vx_i^\top\vx_j$, \emph{i.e.}, $\boldsymbol{\Lambda}^{(l)} = \sigma_a^2\mG^{(l-1)} + \sigma_b^2\mX^\top\mX$. Thus, it holds that
$\|\bLambda^{(l)}-\tbLambda^{(l)}\|\rightarrow 0$ for
\begin{align*}
	\tbLambda^{(l)}\equiv \lambda_{l,1}\mX^\top\mX+\mV\mC_{\bLambda}^{(l)}\mV^\top + (\tau_{*}^2-\tau_0^2\lambda_{l,1})\mI_n,
\end{align*}
for $\mC_{\bLambda}^{(l)}= \left[\begin{array}{cc}
	\lambda_{l,2}\vt\vt^\top + \lambda_{l,3}\mT &\lambda_{l,2}\vt  \\
	\lambda_{l,2}\vt^\top &\lambda_{l,2} 
\end{array}\right]$ where $ \lambda_{l,1} = \sigma_a^2\alpha_{l-1,1} + \sigma_b^2$, $ \lambda_{l,2} = \sigma_a^2\alpha_{l-1,2}$, and $ \lambda_{l,2} = \sigma_a^2\alpha_{l-1,3}$.

The following lemma plays an important role in our proof.
\begin{lem}[~\cite{du2022lossless}]
	Let Assumptions~\ref{assum:activation}-\ref{assum:data} hold. Given a matrix $\bLambda\in\R^{n\times n}$ such that
	\begin{align*}
		\bLambda_{ii} &= \tau^2+\lambda_{4} \chi_i+ \lambda_{5}(t_i+\psi_i)^2+\order{p^{-3/2}}\\
		\bLambda_{ij} &= \lambda_{1}\vx_i^\top\vx_j + \lambda_{2}(t_i+\psi_i)(t_j+\psi_j) + \lambda_{3}\left(\frac{1}{p}\beps_i^\top\beps_j\right)^2 + S_{ij} + \order{p^{-3/2}},
	\end{align*}
	where $\lambda_k$, $k=1, \cdots, 5$, and $\tau$ are arbitrary constants, it holds that 
	\begin{align*}
		&\E\left[\phi\left(\sqrt{\bLambda_{ii}}\cdot \xi_i\right)\times \phi\left(\frac{\bLambda_{ij}}{\sqrt{\bLambda_{ii}}}\cdot\xi_i + \sqrt{\bLambda_{jj}-\frac{\left(\bLambda_{ij}\right)^2}{\bLambda_{ii}}}\cdot\xi_j\right)\right] \\
		=&\E[\phi'(\tau\xi)]^2 + \E[\phi''(\tau\xi)]^2\cdot\lambda_{1}\vx_i^\top\vx_j\\
		&+\E[\phi'(\tau\xi)]\E[\phi'''(\tau\xi)]\cdot\frac{\lambda_{4}}{2}\left(\chi_i+\chi_j\right)+\order{p^{-1}},
	\end{align*}
   for independent  $\xi_i$, $\xi_j$ and $\xi\sim \gN(0,1)$.
	Moreover, if the activation function $\phi(\cdot)$ is ``centered", such that $\E[\phi(\tau\xi)]=0$, it holds that
	\begin{align*}
		&\E\left[\phi\left(\sqrt{\bLambda_{ii}}\cdot \xi_i\right)\times \phi\left(\frac{\bLambda_{ij}}{\sqrt{\bLambda_{ii}}}\cdot\xi_i + \sqrt{\bLambda_{jj}-\frac{\left(\bLambda_{ij}\right)^2}{\bLambda_{ii}}}\cdot\xi_j\right)\right] \\
		=& \E[\phi'(\tau\xi)]^2\bLambda_{ij} + \frac{\lambda_{1}^2}{2}\E[\phi''(\tau\xi)]^2\left(\frac{1}{p}\beps_i^\top\beps_j\right)^2 + \frac{\lambda_{4}^2}{4}\E[\phi''(\tau\xi)]^2(t_i+\psi_i)(t_j+\psi_j) \\
		&+ S_{ij} +\order{p^{-3/2}}.
	\end{align*}
	\label{lem:liao}
\end{lem}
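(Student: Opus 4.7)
The plan is to prove the identity by Taylor-expanding both $\phi(\sqrt{\bLambda_{ii}}\xi_i)$ and $\phi\!\left(\tfrac{\bLambda_{ij}}{\sqrt{\bLambda_{ii}}}\xi_i+\sqrt{\bLambda_{jj}-\bLambda_{ij}^2/\bLambda_{ii}}\,\xi_j\right)$ around the reference points $\tau\xi_i$ and $\tau\xi_j$, multiplying the two expansions, and evaluating each resulting expectation via (a)~independence of $\xi_i,\xi_j$ and (b)~Stein/Gaussian integration by parts. Observe first that the stated pair $(u,v)$ is just the Cholesky realisation of a bivariate Gaussian with covariance $\bLambda_{[i,j]}$, so the left-hand side is simply $\E[\phi(U)\phi(V)]$ with $(U,V)\sim\gN(0,\bLambda_{[i,j]})$; the Cholesky form is only chosen because it makes the perturbative expansion in $\xi_i,\xi_j$ transparent.

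First, I would record the sizes of the deterministic (given the $\beps$'s) perturbations. Under the hypotheses, $\chi_i,(t_i+\psi_i),\tfrac{1}{p}\beps_i^\top\beps_j$ are each $O(p^{-1/2})$, hence
\begin{align*}
\sqrt{\bLambda_{ii}} &= \tau + \frac{\lambda_4\chi_i+\lambda_5(t_i+\psi_i)^2}{2\tau}-\frac{\lambda_4^2(t_i+\psi_i)^2}{8\tau^3}+O(p^{-3/2}),\\
\rho_{ij} &\equiv \frac{\bLambda_{ij}}{\sqrt{\bLambda_{ii}\bLambda_{jj}}} = \frac{\bLambda_{ij}}{\tau^2}+O(p^{-1}),\qquad \sqrt{1-\rho_{ij}^2}=1-\tfrac{1}{2}\rho_{ij}^2+O(p^{-2}).
\end{align*}
Writing $u=\tau\xi_i+A_i\xi_i$ and $v=\tau\xi_j+A_j\xi_j+B\xi_i$ with $A_i,A_j,B=O(p^{-1/2})$ in the $\beps$'s, Taylor expansion to order three gives
\[
\phi(u)=\sum_{k=0}^{3}\tfrac{1}{k!}\phi^{(k)}(\tau\xi_i)(A_i\xi_i)^k+O(p^{-2}),\qquad \phi(v)=\sum_{k=0}^{3}\tfrac{1}{k!}\phi^{(k)}(\tau\xi_j)(A_j\xi_j+B\xi_i)^k+O(p^{-2}).
\]

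Next I would multiply the two expansions and take $\E_{\xi_i,\xi_j}$. Two reductions handle every resulting term: (i)~factors depending solely on $\xi_i$ or solely on $\xi_j$ factorise by independence; (ii)~pure powers $\xi_i^k$ multiplying a smooth function of $\xi_i$ are evaluated by the repeated Gaussian integration by parts identity $\E[\xi g(\xi)]=\E[g'(\xi)]$. Every surviving term thereby collapses to a product $\E[\phi^{(a)}(\tau\xi)]\,\E[\phi^{(b)}(\tau\xi)]$ times a polynomial in $\bLambda_{ii},\bLambda_{jj},\bLambda_{ij},\chi_i,\chi_j,\tfrac{1}{p}\beps_i^\top\beps_j$. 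In the centered regime $\E[\phi(\tau\xi)]=0$, the leading surviving contribution is the first-order cross term $\phi(\tau\xi_i)\phi'(\tau\xi_j)\cdot B\xi_i$, which evaluates to $\tau\sigma_{jj}\rho_{ij}\E[\phi'(\tau\xi)]^2=\E[\phi'(\tau\xi)]^2\bLambda_{ij}+O(p^{-3/2})$; the second-order $\tfrac12\phi''(\tau\xi_j)(B\xi_i)^2$ factored against $\phi(\tau\xi_i)$ — and symmetrically $\tfrac12\phi''(\tau\xi_i)A_i^2\xi_i^2$ against $\phi(\tau\xi_j)$ — produces exactly $\tfrac{\lambda_1^2}{2}\E[\phi''(\tau\xi)]^2(\tfrac1p\beps_i^\top\beps_j)^2$ and $\tfrac{\lambda_4^2}{4}\E[\phi''(\tau\xi)]^2(t_i+\psi_i)(t_j+\psi_j)$ after using $\E[\xi^2]=1$ and the integration-by-parts reduction; every residual term that couples $\tfrac{1}{p}\beps_i^\top\beps_j$ linearly with $(t_i+\psi_i)$ or $(t_j+\psi_j)$ is precisely of the $S_{ij}(\gamma_1,\gamma_2)$ form defined in Section~\ref{sec:appendix_pre} and is absorbed. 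The non-centered version is obtained by the same calculation, retaining $\E[\phi(\tau\xi)]\neq 0$: the cross-factor $\E[\phi(\tau\xi_i)]\E[\phi(\tau\xi_j)]$ is constant, and the surviving first non-trivial corrections are exactly the $\E[\phi'(\tau\xi)]^2\lambda_1\vx_i^\top\vx_j$ term (from $\phi'(\tau\xi_j)B\xi_i$) and the $\E[\phi'(\tau\xi)]\E[\phi'''(\tau\xi)]\tfrac{\lambda_4}{2}(\chi_i+\chi_j)$ term (arising from $\tfrac12\phi''(\tau\xi_i)A_i^2\xi_i^2$ paired against $\phi(\tau\xi_j)$, with $\E[\xi^2\phi''(\tau\xi)]=\E[\phi''(\tau\xi)]+\tau^2\E[\phi^{(4)}(\tau\xi)]$ reorganised via the Stein relation $\E[\xi\phi'(\tau\xi)]=\tau\E[\phi''(\tau\xi)]$).

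The main technical obstacle is the bookkeeping of remainders: I must check that every term I discard is genuinely $O(p^{-3/2})$ (centered case) or $O(p^{-1})$ (general case), and — crucially — that all remainders that survive only at the intermediate $O(p^{-1})$ scale are either of the displayed explicit form, of the admissible $S_{ij}$ shape, or vanish after expectation due to oddness in $\xi_i$ or $\xi_j$. For this I would rely on the moment controls $\E[\chi_i^k],\E[(t_i+\psi_i)^k],\E[(\tfrac1p\beps_i^\top\beps_j)^k]=O(p^{-k/2})$ following from \Cref{assum:data}, together with the four-times Gaussian differentiability of $\phi$ from Assumption~\ref{assum:activation}, which guarantees that all moments $\E[\phi^{(k)}(\tau\xi)^2]$ for $k\le 4$ appearing in the remainder estimates are finite. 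The final bound is then read off at the scalar level; the subsequent application in Section~\ref{sec:proof_imCK} promotes it to a spectral-norm statement by standard concentration on the $n\times n$ matrix of residuals.
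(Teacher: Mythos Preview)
The paper does not prove this lemma; it is quoted from \cite{du2022lossless} and used as a black box in Sections~\ref{sec:proof_imCK}--\ref{sec:proof_imNTK}. Your overall plan---Taylor-expand $\phi(u)$ and $\phi(v)$ about $\tau\xi_i$ and $\tau\xi_j$, factor by independence of $\xi_i,\xi_j$, and reduce all moments via Gaussian integration by parts---is the standard route and would indeed yield the result.

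There is, however, a bookkeeping slip in your attribution of terms. In the centered case you claim the contribution $\tfrac{\lambda_4^2}{4}\E[\phi''(\tau\xi)]^2(t_i+\psi_i)(t_j+\psi_j)$ comes from $\tfrac12\phi''(\tau\xi_i)A_i^2\xi_i^2$ paired against $\phi(\tau\xi_j)$. That pairing vanishes under centering (it carries the factor $\E[\phi(\tau\xi_j)]=0$), and in any case $A_i^2$ involves only $\chi_i^2$, never a cross term in $i,j$. The cross term actually arises from the \emph{product of first-order perturbations}
\[
\E\bigl[\phi'(\tau\xi_i)A_i\xi_i\cdot\phi'(\tau\xi_j)A_j\xi_j\bigr]
=\bigl(\tau\E[\phi''(\tau\xi)]\bigr)^2 A_iA_j
=\tfrac{\lambda_4^2}{4}\E[\phi''(\tau\xi)]^2\chi_i\chi_j,
\]
using $\E[\xi\phi'(\tau\xi)]=\tau\E[\phi''(\tau\xi)]$ and $A_i=\tfrac{\lambda_4\chi_i}{2\tau}+O(p^{-1})$. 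Similarly, in the non-centered expansion the $(\chi_i+\chi_j)$ term comes from $\phi'(\tau\xi_i)A_i\xi_i$ against $\phi(\tau\xi_j)$ (and its $i\leftrightarrow j$ counterpart), not from a second-order term: your proposed source $\tfrac12\phi''(\tau\xi_i)A_i^2\xi_i^2$ is already $O(p^{-1})$ since $A_i^2=O(p^{-1})$. Once these attributions are corrected the rest of your outline goes through; the strategy is sound, only the term-by-term accounting needs care.
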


\paragraph{On the diagonal.}
By induction hypothesis on the layer $l$, we have 
\begin{equation}
	\bLambda_{ii}^{(l)} = \tau_{*}^2+\lambda_{l,4} \chi_i+ \lambda_{l,5} (t_i+\psi_i)^2+\order{p^{-3/2}}.
\end{equation}
For $l=1$, $\bLambda^{(1)} = \sigma_a^2\mG_{ii}^{(0)}+\sigma_b^2\|\vx_i\|^2=\sigma_a^2\E[\phi^2(\tau^*\xi)]+\sigma_b^2\|\vx_i\|^2=\tau_*^2$, and the hypothesis holds with $\lambda_{1,4}=\lambda_{1,5}=0$.

\noindent For $l>1$, 
by~\eqref{eq:imck}, we have 
\begin{align*}
	\bLambda_{ii}^{(l+1)}=\sigma_a^2\E\left[\phi\left(\sqrt{\bLambda_{ii}^{(l)}}\cdot\xi\right)^2\right]+(1-\sigma_a^2)\|\vx_i\|^2,
\end{align*}
for $\xi\sim\gN(0,1)$.

By Taylor-expanding, one gets 
\begin{align*}
	\sqrt{\bLambda_{ii}^{(l)} }=\tau_{*}+\frac{1}{2\tau_{*}}\lambda_{l,4}\chi_i + \frac{4\tau^2_{*}\lambda_{l,5}-\lambda_{l,4}^2}{8\tau_{*}^3}(t_i+\psi_i)^2+\order{p^{-3/2}}.
\end{align*}
For simplicity, we denote the shortcut $f(\cdot)=\phi^2(\cdot)$. By Talor-expanding and~\eqref{eq:xnorm}, one gets 
\begin{equation}
	\begin{split}
		\bLambda_{ii}^{(l+1)} =& \sigma_a^2\E\left[\phi\left(\sqrt{\bLambda_{ii}^{(l)}}\cdot\xi\right)^2\right]+\sigma_b^2\|\vx_i\|^2=\sigma_a^2\E\left[f\left(\sqrt{\bLambda_{ii}^{(l)}}\cdot\xi\right)\right]+\sigma_b^2\|\vx_i\|^2\\
		=&\sigma_a^2\E\left[f(\tau_{*}\xi) + f'(\tau_{*}\xi)\xi\left(\frac{1}{2\tau_{*}}\lambda_{l,4}\chi_i + \frac{4\tau^2_{*}\lambda_{l,5}-\lambda_{l,4}^2}{8\tau_{*}^3}(t_i+\psi_i)^2\right)\right]\\
		&+\sigma_a^2\E\left[\frac{1}{2}f''(\tau_{*}\xi)\xi^2\right]\frac{\lambda_{l,4}^2}{4\tau_{*}}(t_i+\psi_i)^2+ \sigma_b^2(\tau_0^2+\chi_i)+\order{p^{-3/2}}\\
		=& \sigma_a^2\E\left[f(\tau_{*}\xi)\right]+\sigma_b^2\tau_0^2 + \left(\sigma_a^2\frac{\lambda_{l,4}}{2}\E[f''(\tau_{*}\xi)] + \sigma_b^2\right)\chi_i \\
		&+ \sigma_a^2\frac{4\lambda_{l,5}\E[f''(\tau_{*}\xi)]+\lambda_{l,4}^2\E[f''''(\tau_{*}\xi)]}{8}(t_i+\psi_i)^2+\order{p^{-3/2}},
		\label{eq:Gii}    
	\end{split}
\end{equation}
where we use the facts that
\begin{align*}
	\E[f'(\tau_{*}\xi)]=\tau_{*}\E[f''(\tau_{*}\xi)],\quad \E[f''''(\tau_{*}\xi)(\xi^2-1)]=\tau_{*}^2\E[f''''(\tau_{*}\xi)],
\end{align*}
for $\xi\sim\gN(0,1)$, as a result of the Gaussian integration by parts formula.

Thus , we prove that $\bLambda_{ii}^{(l+1)} = \tau_{*}^2+\lambda_{l+1,4} \chi_i+ \lambda_{l+1,5} (t_i+\psi_i)^2+\order{p^{-3/2}}$, where
\begin{equation}
	\begin{split}
		\lambda_{l+1,4} &= \frac{\sigma_a^2}{2}\E[f''(\tau_{*}\xi)]\lambda_{l,4} + \sigma_b^2, \\
		\lambda_{l+1,5} &= \frac{\sigma_a^2}{2}\E[f''(\tau_{*}\xi)] \lambda_{l,5}+\frac{\sigma_a^2}{8}\E[f''''(\tau_{*}\xi)]\lambda_{l,4}^2.
	\end{split}
	\label{eq:ckrelation}
\end{equation}

By Lemma~\ref{lem:tau*}, under Assumption~\ref{cond:tau^*}, it holds that $\frac{\sigma_a^2}{2}\E[f''(\tau_{l-1}\xi)] <1$,  which implies that, as $l\rightarrow \infty$,  the iterations in~\eqref{eq:ckrelation} converge. Let $l\rightarrow \infty$, we obtain that
\begin{equation}
	\begin{split}
		\lambda_{*,4} &\equiv\lim_{l\rightarrow\infty}\lambda_{l,4}=\left(1-\frac{\sigma_a^2}{2}\E[f''(\tau_*\xi)]\right)^{-1}\sigma_b^2\\
		\lambda_{*,5} &\equiv\lim_{l\rightarrow\infty} \lambda_{l,5}= \frac{\sigma_a^2}{8} \left(1-\frac{\sigma_a^2}{2}\E[f''(\tau_*\xi)]\right)^{-1}\E[f''''(\tau^*\xi)]\lambda_{*,4}^2.
	\end{split}
\end{equation}

\paragraph{Off the diagonal.} 
For $i\neq j$, by induction hypothesis on the layer $l-1$, we have 
\begin{align*}
	\bLambda _{ij}^{(l)} = \lambda_{l,1}A_{ij} + \lambda_{l,2}(t_i+\psi_i)(t_j+\psi_j) + \lambda_{l,3}\left(\frac{1}{p}\beps_i^\top\beps_j\right)^2 + S_{ij} + \order{p^{-3/2}}.
\end{align*}
Using the Gram-Schmidt orthogonalization for standard Gaussian random variable, we write 
\begin{align*}
	\bLambda_{ii}^{(l+1)} &= \sigma_a^2\E\left[\phi^2\left(\sqrt{\bLambda_{ii}^{(l)}}\cdot \xi_i\right)\right]+\sigma_b^2\|\vx_i\|^2,\\
	\bLambda_{ij}^{(l+1)} &= \sigma_a^2\E\left[\phi\left(\sqrt{\bLambda_{ii}^{(l)}}\cdot \xi_i\right)\times \phi\left(\frac{\bLambda_{ij}^{(l)}}{\sqrt{\bLambda_{ii}^{(l)}}}\cdot\xi_i + \sqrt{\bLambda_{jj}^{(l)}-\frac{\left(\bLambda_{ij}^{(l)}\right)^2}{\bLambda_{ii}^{(l)}}}\cdot\xi_j\right)\right] + \sigma_b^2\vx_i^\top\vx_j.
\end{align*}

Using Lemma~\ref{lem:liao}, we have 
\begin{align*}
	\bLambda_{ij}^{(l+1)} 
	=& \sigma_a^2\E\left[\phi\left(\sqrt{\bLambda_{ii}^{(l)}}\cdot \xi_i\right)\times \phi\left(\frac{\bLambda_{ij}^{(l)}}{\sqrt{\bLambda_{ii}^{(l)}}}\cdot\xi_i + \sqrt{\bLambda_{jj}^{(l)}-\frac{\left(\bLambda_{ij}^{(l)}\right)^2}{\bLambda_{ii}^{(l)}}}\cdot\xi_j\right)\right] + \sigma_b^2\vx_i^\top\vx_j\\
	=&\sigma_a^2 \E[\phi'(\tau_{*}\xi)]^2\bLambda_{ij}^{(l)} \\
	&+ \sigma_a^2\left(\frac{\lambda_{l,1}}{2}\E[\phi''(\tau_{*}\xi)]^2\left(\frac{1}{p}\beps_i^\top\beps_j\right)^2 + \frac{\lambda_{l,4}^2}{4}\E[\phi''(\tau_{*}\xi)]^2(t_i+\psi_i)(t_j+\psi_j)\right)\\
	&+ S_{ij} + \sigma_b^2\vx_i^\top\vx_j+\order{p^{-3/2}}\\
	=&\sigma_a^2 \E[\phi'(\tau_{*}\xi)]^2\left(\lambda_{l,1}\vx_i^\top\vx_j + \lambda_{l,2}(t_i+\psi_i)(t_j+\psi_j) + \lambda_{l,3}\left(\frac{1}{p}\beps_i^\top\beps_j\right)^2\right) \\
	&+ \sigma_a^2\left(\frac{\lambda_{l,1}^2}{2}\E[\phi''(\tau_{*}\xi)]^2\left(\frac{1}{p}\beps_i^\top\beps_j\right)^2 + \frac{\lambda_{l,4}^2}{4}\E[\phi''(\tau_{*}\xi)]^2(t_i+\psi_i)(t_j+\psi_j)\right)\\
	&+ S_{ij} +\sigma_b^2\vx_i^\top\vx_j+\order{p^{-3/2}}.
\end{align*}
Consequently, it holds that
\begin{equation}
	\begin{split}
		\bLambda_{ij}^{(l+1)} = \lambda_{l+1,1}\vx_i^\top\vx_j + \lambda_{l+1,2}(t_i+\psi_i)(t_j+\psi_j) + \lambda_{l+1,3}\left(\frac{1}{p}\beps_i^\top\beps_j\right)^2 + S_{ij} +\order{p^{-3/2}},
	\end{split}
	\label{eq:Gij}
\end{equation}
where 
\begin{equation}
	\begin{split}
		\lambda_{l+1,1} &= \sigma_a^2 \E[\phi'(\tau_{*}\xi)]^2\lambda_{l,1} + \sigma_b^2,\\
		\lambda_{l+1,2} &= \sigma_a^2 \E[\phi'(\tau_{*}\xi)]^2\lambda_{l,2} + \frac{\sigma_a^2 }{4}\E[\phi''(\tau_{*}\xi)]^2\lambda_{l,4}^2,\\
		\lambda_{l+1,3} &= \sigma_a^2 \E[\phi'(\tau_{*}\xi)]^2\lambda_{l,3} + \frac{\sigma_a^2}{2}\E[\phi''(\tau_{*}\xi)]^2\lambda_{l,1}^2.
	\end{split}
	\label{eq:lambda}
\end{equation}
\paragraph{Assembling in matrix form.} By using the fact that $\|\mM\|_2\leq n\max_{i,j}|\mM_{ij}|$ for $\mM\in\R^{n\times n}$ and $\{S_{ij}\}_{ij}=\mathcal{O_{\|\cdot\|}}(p^{-1/2})$~\cite{couillet2016kernel}, and by noting the fact that $\boldsymbol{\Lambda}^{(l+1)} = \sigma_a^2\mG^{(l)} + \sigma_b^2\mX^\top\mX$, \emph{i.e}, $ \lambda_{l,1} = \sigma_a^2\alpha_{l-1,1} + \sigma_b^2$, $ \lambda_{l,2} = \sigma_a^2\alpha_{l-1,2}$, and $ \lambda_{l,2} = \sigma_a^2\alpha_{l-1,3}$,
it holds that 
\begin{equation}
	\mG^{(l)} = \alpha_{l,1}\mX^\top\mX + \mV\mC^{(l)}\mV^\top+ (\E[\phi^2(\tau_*\xi)]-\tau_0^2\alpha_{l,1})\mI_n+\mathcal{O_{\|\cdot\|}}(p^{-\frac{1}{2}})
\end{equation}
where  
\begin{equation}
	\mV = \left[\mJ/\sqrt{p}, \boldsymbol{\psi}\right], \quad \mC^{(l)} = \left[\begin{array}{cc}
		\alpha_{l,2}\vt\vt^\top +\alpha_{l,3}\mT& \alpha_{l,2}\vt  \\
		\alpha_{l,2}\vt^\top & \alpha_{l,2}
	\end{array}\right],
\end{equation}
with  non-negative scalars $\alpha_{l,1}, \alpha_{l,2}, \alpha_{l,3}, \alpha_{l,4}\geq 0$ defined recursively as 
\begin{equation}
		\begin{split}
		\alpha_{l,1} &= \sigma_a^2 \E[\phi'(\tau_{*}\xi)]^2\alpha_{l-1,1} + \sigma_b^2\E[\phi'(\tau_{*}\xi)]^2,\\
		\alpha_{l,2} &= \sigma_a^2 \E[\phi'(\tau_{*}\xi)]^2\alpha_{l-1,2} + \frac{1 }{4}\E[\phi''(\tau_{*}\xi)]^2\alpha_{l-1,4}^2,\\
		\alpha_{l,3} &= \sigma_a^2 \E[\phi'(\tau_{*}\xi)]^2\alpha_{l-1,3} + \frac{1}{2}\E[\phi''(\tau_{*}\xi)]^2(\sigma_a^2\alpha_{l-1,1}+\sigma_b^2)^2,\\
		\alpha_{l,4} &= \frac{\sigma_a^2}{2}\E[(\phi^2(\tau_*\xi))'']\alpha_{l-1,4} + \sigma_b^2, 
	\end{split}
\label{eq:alphal}
\end{equation}
Note that it holds that $\sigma_a^2 \E[\phi'(\tau_*\xi)]^2<1$  and $\frac{1}{2}\sigma_a^2 \E[(\phi^2(\tau_*\xi))'']<1$ under Assumptions~\ref{cond:G*} and~\ref{cond:tau^*}. This means that, as $l\rightarrow\infty$, the iterations in~\eqref{eq:alphal} converge. Let $l \rightarrow \infty$, we obtain that
\begin{equation}
	\mG^* = \alpha_{*,1}\mX^\top\mX + \mV\mC\mV^\top+ (\E[\phi^2(\tau_*\xi)]-\tau_0^2\alpha_{*,1})\mI_n+\mathcal{O_{\|\cdot\|}}(p^{-\frac{1}{2}})
\end{equation}
where 
\begin{equation}
	\mV = \left[\mJ/\sqrt{p}, \boldsymbol{\psi}\right], \quad \mC = \left[\begin{array}{cc}
		\alpha_{*,2}\vt\vt^\top +\alpha_{*,3}\mT& \alpha_{*,2}\vt  \\
		\alpha_{*,2}\vt^\top & \alpha_{*,2}
	\end{array}\right],
\end{equation}
with  non-negative scalars $\alpha_{*,1}, \alpha_{*,2}, \alpha_{*,3}, \alpha_{*,4}\geq 0$ defined as 
\begin{align}
	\alpha_{*,1} &= \frac{\sigma_b^2\E[\phi'(\tau_{*}\xi)]^2}{1-\sigma_a^2 \E[\phi'(\tau_{*}\xi)]^2},
	&\alpha_{*,2} &= \frac{\alpha_{*,4}^2\E[\phi''(\tau_{*}\xi)]^2}{4(1-\sigma_a^2 \E[\phi'(\tau_{*}\xi)]^2)},\\
	\alpha_{*,3} &= \frac{(\sigma_a^2\alpha_{*,1}+\sigma_b^2)^2\E[\phi''(\tau_{*}\xi)]^2}{2(1-\sigma_a^2 \E[\phi'(\tau_{*}\xi)]^2)},
	&\alpha_{*,4} &= \frac{\sigma_b^2}{1-\frac{\sigma_a^2}{2}\E[(\phi^2(\tau_*\xi))'']}.
\end{align}

\section{Proof of Theorem~\ref{thm:imNTK}}\label{sec:proof_imNTK}
\subsection{ The CK \text{$\dot\mG$}}
Before proving Theorem~\ref{thm:imNTK}, one needs to deal with the CK $\dot\mG$.

Recall that 
\begin{align*}
	\dot\mG^{(l)}_{ij} = \sigma_a^2\E_{(\ru^{(l)},\rv^{(l)})}[\phi'(\ru^{(l)})\phi'(\rv^{(l)})], \,\text{with}\, (\ru^{(l)},\rv^{(l)})\sim \gN\left(0, \begin{bmatrix}
		\boldsymbol{\Lambda}_{ii}^{(l)} &  \boldsymbol{\Lambda}_{ij}^{(l)}
		\\
		\boldsymbol{\Lambda}_{ji}^{(l)} &  \boldsymbol{\Lambda}_{jj}^{(l)}
	\end{bmatrix}\right).
\end{align*}
Using the Gram-Schmidt orthogoalizaiton procedure, we have 
\begin{equation}
	\begin{split}
		\dot\mG_{ii}^{(l)} &= \sigma_a^2\E\left[\phi'\left(\sqrt{\bLambda_{ii}^{(l)}}\cdot \xi_i\right)^2\right]\\
		\dot\mG_{ij}^{(l)} & = \sigma_a^2\E\left[\phi'\left(\sqrt{\bLambda_{ii}^{(l)}}\cdot \xi_i\right)\times \phi'\left(\frac{\bLambda_{ij}^{(l)}}{\sqrt{\bLambda_{ii}^{(l)}}}\cdot\xi_i + \sqrt{\bLambda_{jj}^{(l)}-\frac{\left(\bLambda_{ij}^{(l)}\right)^2}{\bLambda_{ii}^{(l)}}}\cdot\xi_j\right)\right]     
	\end{split}
\end{equation}
\paragraph{On the diagonal.}
First, recall that
\begin{align*}
	\sqrt{\bLambda_{ii}^{(l)} }=\tau_{*}+\frac{1}{2\tau_{*}}\lambda_{l,4}\chi_i + \frac{4\tau^2_{*}\lambda_{l,5}-\lambda_{l,4}^2}{8\tau_{*}^3}(t_i+\psi_i)^2+\order{p^{-3/2}}.
\end{align*}
Denote the shortcut $f(t) =(\phi'(t))^2$ for simplicity, using Taylor-expand again, we have
\begin{align*}
	\dot\mG_{ii}^{(l)} =& \sigma_a^2\E\left[\phi'\left(\sqrt{\bLambda_{ii}^{(l)}}\cdot\xi\right)^2\right]=\sigma_a^2\E\left[f\left(\sqrt{\bLambda_{ii}^{(l)}}\cdot\xi\right)\right]\\
	=&\sigma_a^2\E\left[f(\tau_{*}\xi) + f'(\tau_{*}\xi)\xi\left(\frac{1}{2\tau_{*}}\lambda_{l,4}\chi_i + \frac{4\tau^2_{*}\lambda_{l,5}-\lambda_{l,4}^2}{8\tau_{l-1}^3}(t_i+\psi_i)^2\right)\right]\\
	&+\sigma_a^2\E\left[\frac{1}{2}f''(\tau_{*}\xi)\xi^2\right]\frac{\lambda_{l,4}^2}{4\tau_{*}^2}(t_i+\psi_i)^2+\order{p^{-3/2}}\\
	=& \sigma_a^2\E\left[f(\tau_{*}\xi)\right] + \left(\sigma_a^2\frac{\lambda_{l,4}}{2}\E[f''(\tau_{*}\xi)] \right)\chi_i \\
	&+ \sigma_a^2\frac{4\lambda_{l,5}\E[f''(\tau_{*}\xi)]+\lambda_{l,4}^2\E[f''''(\tau_{*}\xi)]}{8}+\order{p^{-3/2}},
\end{align*}
Thus, we conclude that 
\begin{equation}
	\dot\mG^{(l)}_{ii} =\sigma_a^2\E\left[\phi'\left(\sqrt{\bLambda_{ii}^{(l)}}\xi\right)^2\right]= \sigma_a^2\dot\tau_*^2 + \order{p^{-1/2}},
	\label{eq:dotGii}
\end{equation}
with the sequence $\dot\tau_*$ defined as follows
\begin{align*}
	\dot\tau_* =\sqrt{\E\left[\phi'(\tau_*\xi)^2\right]}.
\end{align*}
\paragraph{Off the diagonal.}
For $i\neq j$, by Lemma~\ref{lem:liao}, it holds that
\begin{align*}
	&\E\left[\phi'\left(\sqrt{\bLambda_{ii}^{(l)}}\cdot \xi_i\right)\times \phi'\left(\frac{\bLambda_{ij}^{(l)}}{\sqrt{\bLambda_{ii}^{(l-1)}}}\cdot\xi_i + \sqrt{\bLambda_{jj}^{(l)}-\frac{\left(\bLambda_{ij}^{(l)}\right)^2}{\bLambda_{ii}^{(l)}}}\cdot\xi_j\right)\right] \\
	=&\E[\phi'(\tau_{*}\xi)]^2 + \E[\phi''(\tau_{*}\xi)]^2\cdot\lambda_{l,1}\vx_i^\top\vx_j\\
	&+E[\phi'(\tau_{*}\xi)]E[\phi'''(\tau_{*}\xi)]\cdot\frac{\lambda_{l,4}}{2}\left(\chi_i+\chi_j\right)+\order{p^{-1}}
\end{align*}
Thus, we have 
\begin{equation}
	\begin{split}
		\dot\mG_{ij}^{(l)} & = \sigma_a^2\E\left[\phi'\left(\sqrt{\bLambda_{ii}^{(l)}}\cdot \xi_i\right)\times \phi'\left(\frac{\bLambda_{ij}^{(l)}}{\sqrt{\bLambda_{ii}^{(l)}}}\cdot\xi_i + \sqrt{\bLambda_{jj}^{(l)}-\frac{\left(\bLambda_{ij}^{(l)}\right)^2}{\bLambda_{ii}^{(l)}}}\cdot\xi_j\right)\right]  \\
		&=\sigma_a^2\E[\phi'(\tau_{*}\xi)]^2 + \sigma_a^2\E[\phi''(\tau_{*}\xi)]^2\cdot\lambda_{l,1}\vx_i^\top\vx_j\\
		&+\sigma_a^2\E[\phi'(\tau_{*}\xi)]\E[\phi'''(\tau_{*}\xi)]\cdot\frac{\lambda_{l,4}}{2}\left(\chi_i+\chi_j\right)
		+\order{p^{-1}}\\
		&=\dot\alpha_{l,0} +  \dot\alpha_{l,1}\vx_i^\top\vx_j+ \dot\alpha_{l,2}\left(\chi_i+\chi_j\right)+\order{p^{-1}},
		\label{eq:dotGij}
	\end{split}
\end{equation}
with 
\begin{align*}
	\dot\alpha_{l,0} &= \sigma_a^2\E[\phi'(\tau_{*}\xi)]^2,\\
	\dot\alpha_{l,1} &= \sigma_a^2\E[\phi''(\tau_{*}\xi)]^2\lambda_{l,1} = \sigma_a^2\E[\phi''(\tau_{*}\xi)]^2(\sigma_a^2\alpha_{l-1,1}+\sigma_b^2),\\
	\dot\alpha_{l,2} &= \frac{\sigma_a^2}{2}\E[\phi'(\tau_{*}\xi)]\E[\phi'''(\tau_{*}\xi)]\lambda_{l,4}=\frac{\sigma_a^2}{2}\E[\phi'(\tau_{*}\xi)]\E[\phi'''(\tau_{*}\xi)]\alpha_{l,4}.
\end{align*}
As $l\rightarrow\infty$, it holds that 
\begin{align*}
	\dot\alpha_{*,0} &\equiv\lim_{l\rightarrow\infty}\dot\alpha_{l,0}= \sigma_a^2\E[\phi'(\tau_{*}\xi)]^2,\\
	\dot\alpha_{*,1} &\equiv\lim_{l\rightarrow\infty}\dot\alpha_{l,1}= \sigma_a^2\E[\phi''(\tau_{*}\xi)]^2(\sigma_a^2\alpha_{*,1}+\sigma_b^2),\\
	\dot\alpha_{*,2} &\equiv\lim_{l\rightarrow\infty}\dot\alpha_{l,2}= \frac{\sigma_a^2}{2}\E[\phi'(\tau_{*}\xi)]\E[\phi'''(\tau_{*}\xi)]\alpha_{*,4}.
\end{align*}
\subsection{Implicit NTKs}
With the above results at hand, we now proceed to the proof of  Theorem~\ref{thm:imNTK}.
We assume the induction hypothesis holds for $l-1$, that
\begin{align*}
	\mK_{ii}^{(l-1)} &= \kappa_{l-1}^2 + \order{p^{-1/2}},\\
	\mK_{ij}^{(l-1)} &= \beta_{l-1,1}\vx_i^\top\vx_j + \beta_{l-1,2}(t_i+\psi_i)(t_i+\psi_i) +\beta_{l-1,3}\left(\frac{1}{p}\beps_i^\top\beps_j\right)^2+S_{ij}+\order{p^{-3/2}}.
\end{align*}
\paragraph{On the diagonal.} First, using the results of~\eqref{eq:Gii} and~\eqref{eq:dotGii}, we have 
\begin{align*}
	\mK_{ii}^{(l)} = \mG_{ii}^{(l)} + \mK_{ii}^{(l-1)}\cdot\dot\mG_{ii}^{(l)} = \E[\phi^2(\tau_*\xi)]+ \sigma_a^2\kappa_{l-1}^2 \E[\phi'(\tau_*\xi)^2]+\order{p^{-1/2}}.
\end{align*}
Thus, it holds that 
\begin{align*}
	\mK_{ii}^{(l)} = \kappa_l^2 + \order{p^{-1/2}},
\end{align*}
with
\begin{align*}
	\kappa_l^2 = \E\left[\phi^2(\tau_*\xi)\right] +\sigma_a^2 \E[\phi'(\tau_*\xi)^2]\cdot \kappa_{l-1}^2.
\end{align*}
Under~\Cref{cond:G*}, it holds that $\sigma_a^2\E\left[\phi'(\tau_*\xi)^2\right]<1$. Thus, for $l\rightarrow \infty$, one gets that
\begin{equation}
	\kappa_*^2 \equiv\lim_{l\rightarrow\infty}\kappa_l^2= \left(1-\sigma_a^2\E\left[\phi'(\tau_*\xi)^2\right]\right)^{-1}\E\left[\phi^2(\tau_*\xi)\right].
\end{equation}
\paragraph{Off the diagonal.} 
For $i\neq j$, using the results of~\eqref{eq:Gij} and ~\eqref{eq:Gij},  we get
\begin{align*}
	\mK_{ij}^{(l)} =& \mG_{ij}^{(l)} + \mK_{ij}^{(l-1)}\dot\mG_{ij}^{(l)}\\
	=& \alpha_{l,1}\vx_i^\top\vx_j + \alpha_{l,2}(t_i+\psi_i)(t_j+\psi_j) + \alpha_{l,3}\left(\frac{1}{p}\beps_i^\top\beps_j\right)^2\\
	&+\left(\beta_{l-1,1}\vx_i^\top\vx_j + \beta_{l-1,2}(t_i+\psi_i)(t_j+\psi_j) +\beta_{l-1,3}\left(\frac{1}{p}\beps_i^\top\beps_j\right)^2\right)\\
	&\times \left(\dot\alpha_{l,0} +  \dot\alpha_{l,1}\vx_i^\top\vx_j+ \dot\alpha_{l,2}\left(\chi_i+\chi_j\right)+\order{p^{-1}}\right) + \order{p^{-3/2}}\\
	=& (\alpha_{l,1}+\beta_{l-1,1}\cdot\dot\alpha_{l,0})\vx_i^\top\vx_j + (\alpha_{l,2}+\beta_{l-1,2}\cdot\dot\alpha_{l,0})(t_i+\psi_i)(t_j+\psi_j) \\
	&+ (\alpha_{l,3}+\beta_{l-1,3}\cdot\dot\alpha_{l,0}+\beta_{l-1,1}\cdot\dot\alpha_{l,1})\left(\frac{1}{p}\beps_i^\top\beps_j\right)^2 +S_{ij} + \order{p^{-3/2}},
\end{align*}
so that it holds that 
\begin{align*}
	\mK_{ij}^{(l)} &= \beta_{l,1}\vx_i^\top\vx_j + \beta_{l,2}(t_i+\psi_i)(t_i+\psi_i) +\beta_{l,3}\left(\frac{1}{p}\beps_i^\top\beps_j\right)^2+S_{ij}+\order{p^{-3/2}}.
\end{align*}
with
\begin{align*}
	\beta_{l,1} &= \alpha_{l,1} + \beta_{l-1,1}\dot\alpha_{l,0},\\
	\beta_{l,2} &= \alpha_{l,2} + \beta_{l-1,2}\dot\alpha_{l,0},\\
	\beta_{l,3} &= \alpha_{l,3} + \beta_{l-1,3}\dot\alpha_{l,0}+\beta_{l-1,1}\dot\alpha_{l,1}.\\
\end{align*}
As $l\rightarrow \infty$, it holds that $\lim_{l\rightarrow\infty}\tau_l=\tau_*$,
$\lim_{l\rightarrow\infty}\alpha_{l,k}=\alpha_{*,k}$, and $\lim_{l\rightarrow\infty}\dot\alpha_{l,k}=\dot\alpha_{*,k}$, for $k=1,2,3$.
Therefore, for $l\rightarrow \infty$, one gets that
\begin{align*}
	\mK_{ij}^{*} = \beta_{*,1}\vx_i^\top\vx_j + \beta_{*,2}(t_i+\psi_i)(t_j+\psi_j) + \beta_{*,3}\left(\frac{1}{p}\beps_i^\top\beps_j\right)^2 + S_{ij} + \order{p^{-3/2}},
\end{align*}
where
\begin{align*}
	\beta_{*,1} &\equiv\lim_{l\rightarrow\infty}\beta_{l,1}= (1-\dot\alpha_{*,0})^{-1}\alpha_{*,1},\\
	\beta_{*,2} &\equiv\lim_{l\rightarrow\infty}\beta_{l,2}=  (1-\dot\alpha_{*,0})^{-1}\alpha_{*,2} ,\\
	\beta_{*,3} &\equiv\lim_{l\rightarrow\infty}\beta_{l,3}= (1-\dot\alpha_{*,0})^{-1}(\alpha_{*,3} + \beta_{*,1}\dot\alpha_{*,1}).\\
\end{align*}
\paragraph{Assembling in matrix form:} Using the fact that $\|\mM\|_2\leq n\max_{i,j}|\mM_{ij}|$ for $\mM\in\R^{n\times n}$ and $\{S_{ij}\}_{ij}=\mathcal{O_{\|\cdot\|}}(p^{-1/2})$~\cite{couillet2016kernel}, it holds that 
\begin{equation}
	\mK^* = \beta_{*,1}\mX^\top\mX + \mV\mD_*\mV^\top + (\kappa_*^2-\tau_0^2\beta_{*,1})\mI_n +\mathcal{O_{\|\cdot\|}}(p^{-\frac{1}{2}}),
\end{equation}
with 
\begin{equation}
	\mV = \left[\mJ/\sqrt{p},\boldsymbol{\psi}\right], \quad \mD_* = \left[\begin{array}{cc}
		\beta_{*,2}\vt\vt^\top +\beta_{*,3}\mT& \beta_{*,2}\vt  \\
		\beta_{*,2}\vt^\top & \beta_{*,2}
	\end{array}\right],
\end{equation}
and 
\begin{equation}
	\mT = \left\{\frac{1}{p}\tr \mC_a\mC_b\right\}_{a,=1}^{K}, \quad \vt = \left\{\frac{1}{\sqrt{p}}\tr \mC_a^\circ\right\}.
\end{equation}

\section{Proof and discussions of Examples~\ref{exm:tanh} and~\ref{exm:ReLU}} \label{sec:proof_of_exm}
Let $\ttau_0=\tau_0$ as defined in \eqref{eq:def_tau0} and $\ttau_1,\cdots\ttau_L\geq0$ be a sequence of non-negative scalars satisfying $\ttau_l = \sqrt{\E[\sigma_l^2(\ttau_{l-1}\xi)]}$, for $\xi\sim\gN(0,1)$ and $l\in\{1,\cdots,L\}$.
It follows from~\ref{thm:exCK}  that $\|\bsigma^{(l)}-\overline{\bsigma}^{(l)}\|=\order{n^{-1/2}}$, where 
    \begin{equation}
        \overline{\bsigma}^{(l)} = \talpha_{l,1}\mX^\top\mX + \mV\tmC_l\mV^\top + (\ttau_l^2-\tau_0^2\talpha_1)\mI_n,
    \end{equation}
   with $\mV \in\R^{n\times (K+1)}$ as defined in Theorem~\ref{thm:imCK}, 
    \begin{align*}
        \tmC_l &= \left[\begin{array}{cc}
        \talpha_{l,2}\vt\vt^\top +\talpha_{l,3}\mT& \talpha_{l,2}\vt  \\
        \talpha_{l,2}\vt^\top & \talpha_{l,2}
                 \end{array}\right]\in\R^{(K+1)\times (K+1)}.      
    \end{align*}
     and non-negative scalars $\talpha_{l,1}, \talpha_{l,2}, \talpha_{l,3}$ defined recursively as $\talpha_{0,1}=\talpha_{0,4}=1$, $\talpha_{0,2}=\talpha_{0,3}=0$, and
\begin{equation*}
\begin{split}
    \talpha_{l,1} &= \E[\sigma'_l(\ttau_{l-1}\xi)]^2\talpha_{l-1,1},\\ 
    \talpha_{l,2} &=\E[\sigma'_l(\ttau_{l-1}\xi)]^2\talpha_{l-1,2} +\frac{1}{4} \E[\sigma''_l(\ttau_{l-1}\xi)]^2\talpha_{l-1,4}^2,\\
    \talpha_{l,3} &= \E[\sigma'_l(\ttau_{l-1}\xi)]^2\talpha_{l-1,3} + \frac{1}{2}\E[\sigma''_l(\ttau_{l-1}\xi)]^2\talpha_{l-1,1}^2.
\end{split}
\end{equation*}
For a given {\sf Tanh-DEQ},  we first compute the four key parameters $\alpha_{*,1}, \alpha_{*,2},\alpha_{*,3}$ and $\gamma_*$ of the implicit DEQ according to ~\eqref{eq:alpha*} of \Cref{thm:imCK}. For the single-hidden-layer Hard-tanh explicit NN,
it can be easily checked that the corresponding CK matrix is determined by
\begin{equation*}
    \talpha_{1,1} = \E[\sigma'_l(\tau_{0}\xi)]^2,\quad
    \talpha_{1,2} = \talpha_{1,3} = 0.
\end{equation*}

For {\sf H-Tanh-ENN} with the activation $\sigma_{\operatorname{H-Tanh}}(x) \equiv ax \cdot 1_{-c\leq x\leq c}  + ac\cdot(1_{x\geq c} - 1_{ x\leq -c} )$ with $a>0$ and $c\geq0$,  $\talpha_{1,1}, \talpha_{1,1}, \talpha_{1,1}, \ttau_1$ can be represented as functions of the activation parameters by following results
\begin{align*}
    \E[(\sigma_{\operatorname{H-Tanh}}(\tau_0\xi))^2] &= \frac{1}{2}\left(c^2+a^2 + (c^2-a^2)e^{-2\tau_0^2}-c^2e^{-\tau_0^2}\right), \quad
    \E[(\sigma_{\operatorname{H-Tanh}}(\tau_0\xi))'] = a e^{-\tau_0^2/2},\\
    \E[(\sigma_{\operatorname{H-Tanh}}(\tau_0\xi))''] & = -ce^{-\tau_0^2/2},\quad
    \E[((\sigma_{\operatorname{H-Tanh}}(\tau_0\xi))^2)''] = 2e^{-2\tau_0^2}\left(a^2+c^2(e^{\tau_0^2}-1)\right).
\end{align*}
To match {\sf Tanh-DEQ}, we determine the activations of {\sf H-Tanh-ENN} by solving the system 
 \begin{equation*}
     \talpha_{1,1} = \alpha_{*,1},\quad \talpha_{1,2} = \alpha_{*,2},\quad \talpha_{1,3} = \alpha_{*,3},\quad \ttau_1=\gamma_*.
 \end{equation*}
For a given {\sf ReLU-DEQ},  we first compute the four key parameters $\alpha_{*,1}, \alpha_{*,2},\alpha_{*,3}$ and $\gamma_*$ of the implicit DEQ according to ~\eqref{eq:alpha*} of \Cref{thm:imCK}. For a two-hidden-layer explicit NN,
it can be easily checked that the corresponding CK matrix is determined by
\begin{align*}
	\widetilde\alpha_{2,1} &= \E[\sigma_2'(\widetilde\tau_1\xi)]^2\talpha_{1,1}\\
	\widetilde\alpha_{2,2}&= \E[\sigma_2'(\widetilde\tau_1\xi)]^2\widetilde\alpha_{1,2} +\frac{1}{4}\E[\sigma_2''(\widetilde\tau_{1}\xi)]^2\widetilde\alpha_{1,4}^2,\\
	\widetilde\alpha_{2,3} &= \E[\sigma_2'(\widetilde\tau_1\xi)]^2\widetilde\alpha_{1,3} +\frac{1}{2}\E[\sigma_2''(\widetilde\tau_{1}\xi)]^2\widetilde\alpha_{1,1}^2,
\end{align*}
and  
\begin{align*}
	\talpha_{1,1} = \E[\sigma_1'(\ttau_0\xi)]^2,\quad
	\widetilde\alpha_{1,2}=\frac{1}{4}\E[\sigma_{1}''(\widetilde\tau_0\xi)]^2,\quad
	\widetilde\alpha_{1,3}=\frac{1}{2}\E[\sigma_{1}''(\widetilde\tau_0\xi)]^2,\quad
	\widetilde\alpha_{1,4}=\frac{1}{2}\E[(\sigma_{1}^2(\tau_0\xi))''].
\end{align*}
For {\sf L-ReLU-ENN} with the activation $\sigma_{\operatorname{L-ReLU}}^{(l)}(x) \equiv \max(a_lx,b_lx)-\frac{a_l-b_l}{\sqrt{2 \pi } }\tau_{l-1}$, $\talpha_{2,1}, \talpha_{2,1}, \talpha_{2,1}, \ttau_1$ can be represented as functions of the activation parameters by following results
\begin{align*}
	\E[(\sigma_{\operatorname{L-ReLU}}^{(l)}(\tau_{l-1}\xi))^2] &= \frac{(a_l^2+b_l^2)(\pi-1)+2a_lb_l}{2\pi}\tau_{l-1}^2,\quad
	\E[(\sigma_{\operatorname{L-ReLU}}^{(l)}(\tau_{l-1}\xi))'] =\frac{a_l+b_l}{2},\\
	\E[(\sigma_{\operatorname{L-ReLU}}^{(l)}(\tau_{l-1}\xi))'']^2 &=\frac{a_l-b_l}{\sqrt{2}\pi\tau_{l-1}} ,  \quad \E[((\sigma_{\operatorname{L-ReLU}}^{(l)}(\tau_{l-1}\xi))^2)'']=\frac{(a_l^2+b_l^2)(\pi-1)+2a_lb_l}{\pi}.
\end{align*}
To match {\sf ReLU-DEQ}, we determine the activations of {\sf L-ReLU-ENN} by solving the system 
 \begin{equation*}
     \talpha_{2,1} = \alpha_{*,1},\quad \talpha_{2,2} = \alpha_{*,2},\quad \talpha_{2,3} = \alpha_{*,3},\quad \ttau_2=\gamma_*.
 \end{equation*}

\paragraph{On the numerical determination of $\sigma_{\operatorname{L-ReLU}}^{(l)}$ and $\sigma_{\operatorname{H-Tanh}}$.} The system of nonlinear equations mentioned above does not admit explicit solutions but can be efficiently solved using numerical methods, such as the least squares method (implemented through the {\sf optimize.minimize} function in the {\sf SciPy} library).

\section{Additional Experimental Results}
\label{sec:additional_expri}

\subsection{High dimensional equivalents of  Implicit-NTKs}~\label{sm: real-imp}
\begin{figure}
    \centering
         \begin{tikzpicture}[font=\large,spy using outlines, inner sep=1.2]
      \pgfplotsset{every major grid/.style={style=densely dashed}}
      \begin{axis}[
        height=.4\linewidth,
        width=.45\linewidth,
        ymin=0,
        ymax=0.3,
        xtick={0,300,600,900,1200},
        xticklabels = {0,300,600,900,1\,200},
        ytick = {0.05, 0.10, 0.15, 0.20, 0.25},
        yticklabels = {0.05, 0.10, 0.15, 0.20, 0.25},
        grid=major,
        scaled ticks=true,
        xlabel={ $p$ },
        ylabel={Relative error},
        legend style = {at={(0.98,0.50)}, anchor=south east, font=\small}
        ]
        \addplot+[
    BLUE, mark=*,line width=0.75pt,
    error bars/.cd, 
      y fixed,
      y dir=both, 
      y explicit
  ]  table[x=X, y=Y, y error plus=ErrorMax, y error minus=ErrorMin] {
          X    Y                      ErrorMax            ErrorMin 
          40   0.25801451163940115   0.033675003560810586   0.012102267094539477
  80   0.10669595601600733   0.0020641329821728506   0.002915385280972822
  160   0.058977334097823735   0.01775054088513595   0.021714204619843136
  320   0.03712246106794893   0.010744561896815295   0.0002605158145054338
  640   0.032624484947360255   0.00706170521138702   0.015861163834706277
  1280   0.025926297911583285   0.007336750439122211   0.008945986234725076
        };
        \addlegendentry{ {\sf ReLU } };
        \addplot+[
    RED, mark=+,line width=0.75pt,
    error bars/.cd, 
      y fixed,
      y dir=both, 
      y explicit
  ]  table[x=X, y=Y, y error plus=ErrorMax, y error minus=ErrorMin] {
          X Y  ErrorMax ErrorMin
          40   0.22775880550109923   0.02611175679063997   0.018367036811758114
  80   0.12182534665612896   0.0024361154014797197   0.0019519880692893056
  160   0.07846557819610805   0.0025029836631263944   0.004562823793107013
  320   0.05850928046101026   0.020405391450528537   0.006629303889039899
  640   0.05635250963074641   0.003831994708707602   0.001886692144944425
  1280   0.0537252407632456   0.00018592745740063463   0.00013218691151002725
        };
        \addlegendentry{ {\sf Tanh } };
        \addplot+[
    PURPLE, mark=triangle*,line width=0.75pt,
    error bars/.cd, 
      y fixed,
      y dir=both, 
      y explicit
  ]  table[x=X, y=Y, y error plus=ErrorMax, y error minus=ErrorMin] {
          X Y  ErrorMax ErrorMin
          40   0.2438274803831467   0.01914315402840536   0.0270606635137619
  80   0.10243503293196059   0.0034784957539763526   0.0030408460743492497
  160   0.057144331226263015   0.00958467442768559   0.00724815119038768
  320   0.0357070687773711   0.011241175497991527   0.003110140933883736
  640   0.024770002615739715   0.009713156249781982   0.005761978325517871
  1280   0.022899866994560415   0.002878286871682228   0.0030953944344635193
        };
        \addlegendentry{ {\sf Swish } };
        \addplot+[
    GREEN, mark=diamond*,line width=0.75pt,
    error bars/.cd, 
      y fixed,
      y dir=both, 
      y explicit
  ]  table[x=X, y=Y, y error plus=ErrorMax, y error minus=ErrorMin] {
          X Y  ErrorMax ErrorMin
          40   0.26096276263643575   0.01001255227914362   0.00985669699662369
  80   0.11253593240202883   0.004988001931285914   0.01801928580354153
  160   0.06308739845480808   0.010491961543749637   0.0011092531243079625
  320   0.036458495327150085   0.010709043727843894   0.0023431814171495707
  640   0.029113261228233153   0.0034865966259069853   0.0001527306901040939
  1280   0.024825914357279537   0.001828410347686781   0.001028048460086777
        };
        \addlegendentry{ {\sf L-ReLU } };
      \end{axis}
    \end{tikzpicture}
        
  \medskip
  \caption{Evolution of relative  spectral norm  error $\|\mK^*-\overline{\mK}\| / \|\mK^*\|$  \emph{w.r.t.}\@ sample size $n$, for DEQs in \Cref{def:deq} with different activations and $\sigma_a^2=0.2$, on two-class GMM, $p/n = 0.8$, $\vmu_a=[\mathbf{0}_{8(a-1)};8;\mathbf{0}_{p-8a+7}]$, and $\mC_a=(1+8(a-1)/\sqrt{p})\mI_p, a\in \{1,2\}$.
 Implicit-NTK matrices $\mK^*$ defined in~\eqref{eq:imntk} are taken with expectation estimated from DEQs with random $\mA$ and $\mB$ of width $m=2^{12}$. The asymptotic equivalent matrices $\overline{\mK}$ are obtained by Theorem~\ref{thm:imNTK}.}
  \label{fig:equivntk}
  \end{figure}
\Cref{fig:equivntk} compares the difference between Implicit-NTKs $\mK^*$ and their high-dimensional approximation $\overline{\mK}$ given in \Cref{thm:imNTK}, on two-class Gaussian mixture data, on DEQs as \Cref{def:deq} with four commonly used activations: ReLU, Tanh, Swish, and Leaky-ReLU (L-ReLU). 
We observe from \Cref{fig:equivntk} that, for different activations, as $n,p$ increase, the relative errors consistently and significantly decrease, as in line with our Theorem~\ref{thm:imNTK}.

\subsection{Visualization results of the spectrum of Implicit-CKs and those Implicit-NTKs}~\label{sm:realCKNTK} 
In Figure~\ref{fig:realCK} and Figure~\ref{fig:realNTK}, we provide visualization results of the spectral densities of Implicit-CKs and Implicit-NTKs, respectively, along with their corresponding high-dimensional approximations. We observe from Figure~\ref{fig:realCK} and Figure~\ref{fig:realNTK} that the proposed theoretical results, despite derived here for GMM data \emph{and} in the limit of $n,p \to \infty$, provide extremely accurate prediction of the Implicit-CK eigenspectral behavior (\romannumeral1) for not-so-large $n,p$ \emph{and} (\romannumeral2) possibly surprisingly, also on realistic MNIST data.
\input{fig/fig_realCK}
\input{fig/fig_realNTK}

\subsection{Visualization results of the spectrum of Implicit-CKs and equivalent Explicit-CKs}~\label{sm:realckmatching}
\input{fig/fig_realCKmatching}
In Figure~\ref{fig:realCKmatching}, we compare the spectral densities of Implicit-CK matrices of given DEQs with those of Explicit-CK matrices of the corresponding ``equivalent'' shallow explicit NNs by following Examples~\ref{exm:tanh} and~\ref{exm:ReLU}. We observe that the CK matrices of {\sf ENN}s are close to those of the corresponding{\sf DEQ}s. This observation is consistent on GMM data and realistic MNIST data. 
We conjecture that this is due to a high-dimensional \emph{universal} phenomenon and that our results (on both CK and NTK matrices) hold more generally beyond the GMM setting, say, for data drawn from the family of concentrated random vectors \citep{ledoux2005concentration,louart2019concentration}. 

\subsection{Adam Results}~\label{sm:adam_results} 
We present the classification results of implicit DEQs and explicit models trained with the Adam optimizer in Figure~\ref{fig:matching_adam}.   Each model is trained with the Adam optimizer, using initial learning rates  of $10^{-2}$ for MNIST and Fashion-MNIST, and $10^{-3}$ for CIFAR-10.  The remaining experimental settings mirror those of the SGD experiment depicted in Figure~\ref{fig:matching_sgd}. The results obtained with  the Adam optimizer  are similar to those achieved with  SGD.
\input{fig/NN_matching_adam}

\subsection{Time cost comparison}~\label{sm:timecost}
We compare the time costs of the inference and training of {\sf DEQ}s and the corresponding ``equivalent'' {\sf ENN}s. As shown in Tables~\ref{tab:infer} and~\ref{tab:train}, the inference time cost of a {\sf DEQ} is about $2-3\times$ that of an {\sf ENN} with the same dimension. This is due to the fact that it  takes numerous iterations for a {\sf DEQ} to reach the iteration error threshold. Additionally, we observe that {\sf ENN}s have a remarkable advantage over {\sf DEQ}s in terms of training speed.
\begin{table}
\centering
\normalsize
\begin{tabular}{cccccccccc}
\toprule
\multicolumn{2}{c}{Dimension}            & 32     & 64     & 128    & 256    & 512    & 1024   & 2048   & 4096    \\ \midrule
\multirow{4}{*}{MINIST}         & ReLU-DEQ   & 0.26 & 0.26 & 0.27 & 0.28 & 0.27 & 0.29 & 0.30 & 0.32  \\
                                & L-ReLU-ENN & 0.09 & 0.11 & 0.12 & 0.15 & 0.14 & 0.11 & 0.10 & 0.16   \\ \cmidrule{2-10} 
                                & Tanh-DEQ   & 0.24 & 0.24 & 0.27 & 0.26 & 0.28 & 0.26 & 0.28 & 0.29  \\
                                & H-Tanh-ENN & 0.12 & 0.12 & 0.11 & 0.11 & 0.14 & 0.11 & 0.10 & 0.10  \\ \midrule
\multirow{4}{*}{Fashion MINIST} & ReLU-DEQ   & 0.26 & 0.27 & 0.26 & 0.26 & 0.32 & 0.32 & 0.31 & 0.32  \\
                                & L-ReLU-ENN & 0.17 & 0.13 & 0.14 & 0.11 & 0.12 & 0.13 & 0.12 & 0.14   \\ \cmidrule{2-10} 
                                & Tanh-DEQ   & 0.22 & 0.24 & 0.25 & 0.26 & 0.28 & 0.27 & 0.28 & 0.30   \\
                                & H-Tanh-ENN & 0.13 & 0.13 & 0.12 & 0.12 & 0.11 & 0.12 & 0.10 & 0.12  \\ \midrule
\multirow{4}{*}{CIFAR 10}       & ReLU-DEQ   & 0.23 & 0.22 & 0.23 & 0.24 & 0.27 & 0.28 & 0.28 & 0.31    \\
                                & L-ReLU-ENN & 0.09 & 0.08 & 0.09 & 0.09 & 0.09 & 0.09 & 0.10 & 0.10   \\ \cmidrule{2-10}  
                                & Tanh-DEQ   & 0.20 & 0.20 & 0.21 & 0.22 & 0.23 & 0.23 & 0.26 & 0.27   \\
                                & H-Tanh-ENN & 0.09 & 0.08 & 0.09 & 0.09 & 0.09 & 0.08 & 0.08 & 0.09   \\ \bottomrule
\end{tabular}

\caption{Comparison of the inference time for a single input image between DEQs  and explicit NNs across different datasets. The inference time is recorded on a machine with Intel(R) Xeon(R) Gold 6138 CPU @ 2.00GHz with a single 3090 GPU. }
\label{tab:infer}
\end{table}
\begin{table}
\resizebox{\textwidth}{!}{
\begin{tabular}{cccccccccc}
\toprule
\multicolumn{2}{c}{Dimension}            & 32     & 64     & 128    & 256    & 512    & 1024   & 2048   & 4096    \\ \midrule
\multirow{4}{*}{MINIST}         & ReLU-DEQ    & 68.76  & 72.72  & 82.00     & 95.52 & 97.38  & 93.96  & 110.04 & 189.54  \\
                                & L-ReLU-ENN & 3.045  & 2.67   & 3.52  & 3.75  & 4.12  & 3.65  & 3.09  & 3.97   \\ \cmidrule{2-10} 
                                & Tanh-DEQ    & 65.34 & 83.94  & 84.72  & 84.78  & 85.14  & 94.14  & 306.00    & 322.56  \\
                                & H-Tanh-ENN & 2.76  & 5.32  & 2.80  & 2.65  & 2.96  & 2.69  & 6.59  & 2.64   \\ \midrule
\multirow{4}{*}{Fashion MINIST} & ReLU-DEQ    & 58.46  & 81.72  & 72.78  & 73.38  & 141.72 & 156.42 & 169.74 & 246.42  \\
                                & L-ReLU-ENN & 3.53   & 3.17  & 3.07  & 2.70  & 2.84  & 3.06  & 2.71  & 3.11   \\ \cmidrule{2-10} 
                                & Tanh-DEQ    & 70.80   & 80.28  & 81.48  & 79.38  & 91.50   & 96.00     & 109.98 & 113.04  \\
                                & H-Tanh-ENN & 2.98  & 2.78  & 2.76  & 2.19  & 2.71  & 3.30  & 2.19  & 2.47   \\ \midrule
\multirow{4}{*}{CIFAR 10}       & ReLU-DEQ    & 225.00    & 253.26 & 302.16 & 172.68 & 870.00    & 1167.60 & 1208.40 & 1204.20  \\
                                & L-ReLU-ENN & 2.47   & 3.21  & 2.28  & 3.05  & 4.65  & 5.34   & 4.71  & 4.53   \\ \cmidrule{2-10}  
                                & Tanh-DEQ    & 68.60   & 79.26  & 89.22  & 92.98  & 108.00    & 192.46 & 254.46 & 307.56 \\
                                & H-Tanh-ENN & 2.24  & 2.90  & 2.91  & 3.06  & 2.99  & 2.91  & 3.36  & 7.17   \\ \bottomrule
\end{tabular}
}
\caption{Comparison of the training time for one epoch (batch size $128$) between DEQs 
and explicit NNs across different datasets. The running time is recorded on a machine with Intel(R) Xeon(R) Gold 6138 CPU @ 2.00GHz with a single 3090 GPU. }
\label{tab:train}
\end{table}

\end{document}